\documentclass{article}
\usepackage[utf8]{inputenc}

\usepackage{algorithm}
\usepackage{algorithmic}
\usepackage{amsfonts} %mathbb
\usepackage{amsmath} %equations*, align*
\usepackage{amsthm} % proof
\usepackage{multicol}
\usepackage{mathtools} % Ceils and Floor Macros
\usepackage{tikz}
\usepackage{url}
\usepackage{booktabs}
\usepackage{authblk}
\usepackage{amsfonts,amssymb}
\usepackage{nicefrac}
\usepackage{microtype}
\usepackage{algorithm}
\usepackage{algorithmic}
\usepackage{subcaption}
\usepackage[square,numbers]{natbib}
\bibliographystyle{plainnat}

\newcommand{\X}{\mathcal{X}}
\newcommand{\Xh}{\widehat{X}}
\newcommand{\Xt}{\widetilde{X}}

\newcommand{\Var}{\mathbb{V}}
\newcommand{\tsub}{\tau_{sub}}
\newcommand{\tout}{\tau_{\text{out}}}
\newcommand{\tb}{\overline{t}}

\newcommand{\sbar}{\overline{s}}

\newcommand{\re}{\mathbb{R}}

\newcommand{\reg}[1]{R_T^{#1}}

\newcommand{\RS}{REST-SURE }

\newcommand{\Pro}{\mathbb{P}}
\newcommand{\olg}{\overline{\log}}
\newcommand{\nb}{\overline{n}}

\newcommand{\muu}{\underline{\mu}}
\newcommand{\mut}{\widetilde{\mu}}
\newcommand{\muh}{\widehat{\mu}}

\newcommand{\Km}{\mathcal{K}^{-}}

\newcommand{\K}{\mathcal{K}}

\newcommand{\iT}{i_T^*}

\newcommand{\iN}{i_{\nb}^*}
\newcommand{\iout}{i_{\text{out}}}
\newcommand{\ib}{\overline{i}}
\newcommand{\I}{\mathbb{I}}

\newcommand{\F}{\mathcal{F}}
\newcommand{\E}{\mathbb{E}}
\newcommand{\Dmu}{\Delta\muh}
\newcommand{\DX}{\Delta\Xh}
\newcommand{\DeltaTilde}{\widetilde{\Delta}}

\newcommand{\cb}{\overline{c}}
\newcommand{\bh}{\widehat{\beta}}

\newcommand{\ah}{\widehat{\alpha}}
\newcommand{\ab}{\alpha}
\newcommand{\A}{\mathcal{A}}
\newcommand{\As}{\mathcal{A}^*}
\newtheorem{lemma}{Lemma}
\newtheorem{assumption}{Assumption}
\newtheorem{theorem}{Theorem}
\newtheorem{corollary}{Corollary}

\newtheorem{fact}{Fact}

\newtheorem{example}{Example}

\newcommand{\quotes}[1]{``#1''}

\DeclarePairedDelimiter\ceil{\lceil}{\rceil}
\DeclarePairedDelimiter\floor{\lfloor}{\rfloor}

\newcommand{\ignore}[1]{}

\definecolor{mycolor}{rgb}{0,0.7,0.2}

\topmargin     -0.5cm
\oddsidemargin -0.1cm %-0.1cm
\textwidth     16.5cm%16.8cm
\headheight    0.0cm
\textheight    22.5cm
\parindent     6mm
\parskip       1pt
\tolerance     1000

\title{Online Model Selection: a Rested Bandit Formulation}

\author[1]{Leonardo Cella\thanks{correspondence to: leonardocella@gmail.com}}
\author[2]{Claudio Gentile}
\author[1,3]{Massimiliano Pontil}

\affil[1]{Italian Institute of Technology, Genoa, Italy}
\affil[2]{Google Research, New York, USA}
\affil[3]{University College London, United Kingdom}

\makeindex
\begin{document}
\maketitle
\begin{abstract}
\noindent Motivated by a natural problem in online model selection with bandit information, we introduce and analyze a best arm identification problem in the rested bandit setting, wherein arm expected losses decrease with the number of times the arm has been played. The shape of the expected loss functions is similar across arms, and is assumed to be available up to unknown parameters that have to be learned on the fly. We define a novel notion of regret for this problem, where we compare to the policy that always plays the arm having the smallest expected loss at the end of the game. We analyze an arm elimination algorithm whose regret vanishes as the time horizon 
%$T$ 
increases. The actual rate of convergence depends in a detailed way on the postulated functional form of the expected losses. Unlike known model selection efforts in the recent bandit literature, our algorithm exploits the specific structure of the problem to learn the unknown parameters of the expected loss function so as to identify the best arm as quickly as possible. We complement our analysis with a lower bound, indicating strengths and limitations of the proposed solution.
\end{abstract}

\section{Introduction} \label{Sec:Introduction}
Multi-armed bandits are a mathematical framework of sequential decision problems that
have received in the last two decades increasing attention (e.g., \citep{RegretMAB,cella2020meta,MAB,kuzborskij2019efficient,lattimore2018bandit,RobbinsOriginal,RobbinsReviewed}), becoming a prominent area of machine learning and statistics. This framework consists of a sequence of $T$ interactions (or {\em rounds}) between a learning agent $\pi$ and an unknown environment. During each round the learner picks an action from a set of options $\K$ to pull, which are usually referred to as {\em arms}, and the environment consequently generates a feedback (e.g., in the form of a loss value) associated with the chosen action/pulled arm.
%
%by relying on an unknown function associated to the pulled arm.
%
Multi-armed bandits have found applications in a wide variety of domains including clinical trials, online advertising, and product recommendation.

In the standard i.i.d. stochastic bandit setting (e.g., \citep{auer2002finite}), the feedback generated when pulling an arm is modeled as a random variable sampled from a prescribed distribution associated with the selected arm, and this distribution remains the same across rounds.
%
%and this will be the only information available to the learner at that round. 
In contrast, in this paper we are interested in a {\em non-stationary} stochastic bandit setting called \textit{rested bandits}~\cite{allesiardo2017non,besbes2014stochastic,cella2020stochastic,kleinberg2018recharging,levine2017rotting,seznec2018rotting}. %, a topic which has received much recent attention. 
Here, the feedback/losses received upon pulling arms are not i.i.d. anymore.  
%sampled from a fixed distribution. 
Instead, the distribution of losses changes as a function of the number of times each arm has been pulled so far.
%the expected feedback associated to an arm is not fixed throughout all the rounds (e.g. costs are not i.i.d sampled from a fixed distribution). Particularly, in this work we focus on \textit{rested} bandits, in which arms payoff distributions change only when its associated arm is pulled.
%
As a relevant example, assume the expected loss of action $i \in \K$ at a given round takes the parametric form
\begin{equation}\label{e:model}
\frac{\alpha_i}{\sqrt{\tau}} + \beta_i~,
\end{equation}
where $\tau$ is the number of times arm $i$ has been pull up to that round, and $\alpha_i$ and $\beta_i$ are unknown parameters which are specific to that arm.

Considering decreasing expected losses is reasonable whenever the properties of the chosen arm improve as we allocate resources to them. For instance, this is the case in scenarios where the goal is to find the best talent in a pool of candidates, say, the most valuable worker to train in an online labor platform having limited training time. 

A striking motivation behind assumptions like (\ref{e:model}) is the study of online model selection problems with bandit feedback. Here, at each round the only observed feedback is the one associated with the selected arm, where each arm represents a learning device, so that arms could also be referred to as {\em base learners}. 
The online model selection problem when we restrict to base learners which are themselves bandit algorithms has recently received a lot of attention~(e.g., \cite{alns17,foster2019model,fgmz20,pacchiano2020model}). Yet, we would like to emphasize that, in our setting, the base learners could be any generic learning devices (like different neural network architectures) that satisfy Equation~(\ref{e:model}). 
The parameters $\alpha_i$ and $\beta_i$ in (\ref{e:model}) may therefore quantify relevant properties of such models. In a standard statistical learning setting, parameter $\alpha_i$ can quantify the complexity (which may or may not be known) of the $i$-th model class, $\beta_i$ might encode the representational power of that class in the form of the statistical risk of the best-in-class hypothesis (which is typically unknown), while the dependence on $1/\sqrt{\tau}$ is meant to suggest a plausible behavior of the generalization error of the $i$-th algorithm as a function of the training set size $\tau$.
For instance, an arm $i\in\K$ with small $\alpha_i$ and large $\beta_i$ may represent an empirical risk minimizer (ERM) operating on a simple model class where the ERM has an estimation error getting small with few samples, but which only underfits the data without effectively minimizing the approximation error. Conversely, an arm $i\in\K$ with large $\alpha_i$ and small $\beta_i$ may correspond to an ERM operating on a complex model class with large estimation error (where overfitting is likely to occur) and small approximation error.

Given a budget of $T$ training samples, our specific goal is to design a strategy for online {\em selective training}, whereby at each round we have to decide which algorithm the next training example has to be fed to. This problem is of fundamental importance since, in many practical situations, performing a batch model selection (or model training) might be too computationally demanding. Thus, the goal is to design a strategy (a learning policy) that interacts with different learning algorithms with the goal of spending the budget of $T$ samples on the algorithm/model that is likely to perform best after training.
Pulling an arm corresponds to feeding the current sample to the associated algorithm, while observing the feedback corresponds to being able to estimate in an approximate manner (e.g., on a separate test set) the generalization error of the trained algorithm for that arm, this error being a decreasing function of the number of samples the chosen algorithm has so far been trained over.\\
%discovering the best algorithm (or the best model for the task at hand)
%(e.g., the one with lowest statistical risk one). 
%
%
%Online model selection can be naturally cast as a rested bandit problem, in which each arm is associated with a learning algorithm. Pulling an arm thus means revealing one more training point to the corresponding algorithm and the arm's cost measures the expected risk of the algorithm as a function of the number of samples the algorithm has seen ``so far'' during the online exploration.
%
%In this work, we model the online model selection problem as a nonstationary rested bandit. We consider the expected loss of each learning algorithm to represent its true risk. We then consider as payoff the known generalization bound \cite{vapnik2013nature} up to unknown parameters, which is a function of the size of the training dataset. In this setting pulling an arm means revealing him more i.i.d. data, hence reducing its expected loss. We assume the number of interactions to be not big enough to allow each algorithm to fully observe its corresponding training dataset. Therefore, the objective is to design a learning policy deciding at each interaction which learning agent to train in order to find the best one.
%\paragraph{Contributions.}
\newline
{\bf Contributions.} 
%After reviewing the relevant literature, in Section \ref{Sec:Related} we point out the differences with existing model selection problems for contextual bandits.
%
We first propose a novel notion of regret which is suited to the online learning problem we consider here. This regret criterion frames our problem as a best arm identification problem within a rested bandit scenario. We then characterize (at least partially) the structure of the problem by proving a non-asymptotic lower bound restricted to the 2-arm case. Finally, we describe and analyze two action elimination algorithms, and show for one of the two algorithms a regret upper bound that essentially matches the above-mentioned lower bound.\\
%In Section \ref{Sec:Related} we give a literature review. We remark the differences with existing works about the model selection problem for contextual bandits. Additionally, we distinguish the setting we are proposing from the different non-stationary bandit models that have been already investigated. We then introduce the adopted notation and the considered non-stationary parametric setting in Section \ref{Sec:Preliminaries}. There we also define a novel notion of regret suited for the considered problem. Its competitor consists of the policy that selects the learning model having the smallest expected loss if selected from the beginning of the game. In Section \ref{Sec:LowerBound} we characterize the structure of the learning problem by presenting a non-asymptotic lower bound restricting to an instance of a 2-arms bandit setting. In Section \ref{Sec:ParamEst} we focus on the parameters estimation task associated to the parametric non-stationary model. Finally, in Section \ref{Sec:LearningProb} we propose two learning policies and analyze their regret bounds. The study of the first policy is propaedeutic to the main learning solution whose regret bound matches the provided lower bound.
\newline
{\bf Notation.} %In this section we start by defining our learning framework and modeling assumption, we then define and motivate the considered regret function. 
For a positive integer $N$, we abbreviate the set $\{1,\dots,N\}$ by $[N]$. 
%We then denote random variables by upper-case letters. 
%hence, for any event $E$ we have $\I\{E\}=1$ if and only if event $E$ occurs, and is zero otherwise. 
%
We use $\E[\cdot]$ and $\Pro[\cdot]$, to denote expected value and probability measure, respectively. Moreover, for a given $\sigma$-algebra $\F$, $\E_\F[\cdot]$ and $\Pro_\F[\cdot]$ denote their conditional counterparts: $\E_\F[\cdot] = \E[\cdot\,|\,\F]$, and  $\Pro_\F[\cdot] =  \Pro[\cdot\,|\,\F]$. 
%We use $\Var[\cdot]$ to denote the variance of the random variable at argument.
%CG: moved to appendix (never used in main body)
\section{Related works}\label{Sec:Related}
%
%In this paper we consider the online model selection problem. 
The problem of online model selection in bandit settings (specifically, the case where the base learners are  contextual bandit algorithms), has been investigated in a number of papers in recent years, e.g., \cite{alns17,foster2019model,fgmz20,pacchiano2020model}. 
%the last year by considering the contextual bandit model \cite{foster2019model,pacchiano2020model}. 
%In those works, the learning agent interacts with a set of arms representing different base learning algorithms.\\ 
In particular, in \cite{alns17} the authors consider a very general  class of base learners which have to satisfy reasonable stability assumptions. Additionally, to deal with the bandit information, importance weighted feedback is given to the bandit learners. 
In \cite{foster2019model,fgmz20} the emphasis is specifically on linear bandit model selection problems, where model selection operates on the input dimension \cite{foster2019model} or the amount of misspecification \cite{fgmz20}.
Similar to \cite{alns17}, 
in \cite{pacchiano2020model} the authors investigate the problem of algorithm selection in contextual bandits where contexts are stochastic. In order to bypass the stability assumption in \cite{alns17}, an additional smoothed transformation is introduced. The positive side effect induced by this additional step is the ability to feed the base learners with the original feedback with no re-weighting.

%. At each round the learning agent has access to the next sample and based on its value it assigns it to a specific learning algorithm. That said, the main difference with respect to our setting is that we assume each arm to be assigned to an oblivious set of i.i.d. samples, that are partially revealed each time an arm is pulled. 
%
%We consider the online model selection problem. 

Unlike all the above works, we assume the expected loss of the considered base learners depends in specific ways on the number of times each base learner is selected, this dependence being known up unknown parameters that have to be estimated. More importantly, we investigate a performance metric that is different from the standard cumulative regret incurred with respect to the best allocation policy, as studied in \cite{alns17,fgmz20,foster2019model,pacchiano2020model}.

Another stream of literature which is loosely related to our paper is hyperparameter optimization (a representative example being \cite{li2017hyperband}).
%,rostamizadeh2017efficient}
%The existing work on hyperparameter optimization cannot be easily compared to our work, 
%eventually they give a different perspective to the online model selection problem. 
The main difference with our problem is that, besides the standard exploration-exploitation trade-off, here we also have to deal with a trade-off induced by non-stationarity. A hyperparameter optimization algorithm like the one in \cite{li2017hyperband}
%,rostamizadeh2017efficient}, 
adaptively searches in the space of hyperparameters, and the goal is akin to best arm identification. Yet, the feedback is %typically 
assumed to be \textit{stationary}, since
%or adversarial. 
the hyperparameter values do not correspond to stateful objects (as it is the case for our base learners) and hyperparameter configurations are usually evaluated against a separate validation set.
An adversarial variant of the hyperparameter optimization problem was considered in \cite{jamieson2016non}, but their notion of regret is different from ours.

Our problem can be seen as a (rested variant of) the {\em best arm identification problem}, in that our metric reminds the simple-regret that was previously designed for the best-arm identification problem in the standard (stationary) stochastic multi-armed bandit setting (e.g., \cite{audibert2010best,even2006action,gabillon2012best,kaufmann2016complexity}). We recall that best arm identification is aimed at finding the best arm out of a set of alternatives. The problem itself has been investigated from two slightly different viewpoints. In the so called \textit{fixed-confidence} variant, the goal is to minimize the sample complexity (that is, the number of pulls) needed to guarantee that, with some fixed confidence level, the selected arm is truly the one with smallest expected loss. 
In the \textit{fixed-budget} variant, the goal is to find the best-arm within a fixed number of rounds (budget), while minimizing the probability of error.

In our case, we want the learning algorithm to single out with high probability (fixed confidence) the best base learner but, due to the non-stationary nature of the expected loss of base learners, we also want to do so with as few pulls as possible. Hence, we are in a sense combining the two criteria of fixed confidence and fixed budget.

%quickly as possible, that is,  at the same time, we wish the chosen best competitor to have been selected as much as possible.
%
%
%In this paper, we model the online model selection problem as a non-stationary rested bandit setting. 
In the bandits literature, there are two standard ways of modeling non-stationarity: \textit{restless} \citep{Ortner_2014,russac2019weighted,TekinRestedRestless,whittle1988restless} and \textit{rested} \citep{cella2020stochastic,kleinberg2018recharging,kolobov2020online,levine2017rotting,mintz2017non,pike2019recovering,seznec2018rotting} bandits.
In the restless case, the non-stationary nature of %affecting 
the feedback is 
%controlled 
determined only by the environment, and the learning policies either try to detect changes in the payoff distribution in order to restart the learning model, or to apply a weight-decay scheme to the collected observations. On the contrary, in the rested model, the non-stationarity depends on the learning policy itself.
For instance, in models like those in \cite{cella2020stochastic,kleinberg2018recharging,kolobov2020online,pike2019recovering}, the expected payoff distribution of one arm is parametrized by the elapsed time since that arm was last pulled. The main leverage given to the proposed solutions is the possibility of observing more unbiased samples corresponding to a fixed arm-delay pair. This simplifies the parameter estimation problem. Similar to the setting we are proposing, in \cite{levine2017rotting,seznec2018rotting} the authors assume the expected loss of an arm to be a monotonically {\em increasing} function of the number of times the arm was pulled. The striking difference is that, in their variant, a simple greedy solution which at each round selects the currently-best arm is actually an optimal solution. Therefore, their learning problem reduces to estimating for each arm the expected loss corresponding to its next pull, and always select the most promising one. In our setting (see Section \ref{Sec:Preliminaries} below), because expected losses are {\em decreasing}, a similar solution would be clearly sub-optimal, since our objective is to identify the arm minimizing the resulting loss at the end of the game.

%{\em long-term horizon} loss.
%, instead of a cumula immediate one.

Finally, a model that shares similarities with our selective training setting is the active model selection problem investigated in \cite{madani2012active}. Again, the authors only investigate stationary scenarios.

%A slightly different, yet stationary, setting is the 
\section{Learning setting}\label{Sec:Preliminaries}
%
\iffalse
****************
\begin{algorithm}
\caption{Learning Protocol}
\label{Alg:Protocol}
\begin{algorithmic}[1]
\STATE $T(i,0) = 0 \; \forall i \in \K$
\FOR{$t = 1$  {\bfseries to} $T$}
\STATE pull arm $I_t\in\K$
\STATE $T(I_t, t) = T(I_t, t-1) + 1$
\STATE observe loss $X_{I_t}(T(I_t, t))$ 
\ENDFOR
\end{algorithmic}
\end{algorithm}
***************
\fi

%We consider an online model selection problem with bandit information, where we have 
We consider a set of $K$ arms 
(or learning agents) 
$\K = [K] = \{1,\ldots,K\}$, whose average performance improves as we play them.
%is measured in terms of their generalization error (loss). 
At each round $t\in[T]$, 
%an online learning policy $\pi$ 
the learner picks an arm $I_t \in \K$ and observes the realization $X_{I_t,t}$ of a loss random variable whose (conditional) expectation $\mu_{I_t,t}$ is a decreasing function of the number of times arm $I_t$ has been pulled so far. Specifically, for any $i \in \K$ and $t \in [T]$, denote by $\tau(i,t)$ the number of times arm $i$ has been pulled up to time $t$, and by $\F_t$ the $\sigma$-algebra generated by the past history of pulls and loss random variables $I_1,X_{I_1,1},\dots,X_{I_{t-1},t-1}$.
Given a time horizon $T$, a learning policy $\pi$ is a function 
%$\pi:\F_t\to \K$ 
that maps at each time $t \in [T]$ the observed history $I_1,X_{I_1,1}, \ldots, I_{t-1},X_{I_{t-1},t-1}$ to the next action $I_t\in\K$. At the end of round $T$, policy $\pi$ has to commit to (or to output) a given action $\iout \in \K$.
Then, 
%denoting by $\E_{\F_t}[\cdot]$ the conditional expectation $\E[\cdot\,|\,\F_t]$, 
we define
%
%$\mu_{i,t} = \E_{\F_t}[X_{i,t}]$ is equal to
%
\begin{equation}
\label{e:mu}
\mu_{i,t} \equiv \E_{\F_t}[X_{i,t}] = \frac{\alpha_i}{\big(1+\tau(i,t-1)\big)^\rho} + \beta_i~,
\end{equation}
where exponent $\rho \in (0,1]$ is a known parameter common to all arms while, for all arms $i\in\K$, scaling parameter $\alpha_i$ and position parameter $\beta_i$ are assumed to be non-negative but {\em unknown} to the learning algorithm. We assume $\alpha_i\in[0,U]$ and $\beta_i\in[0,1]$, where the upper extreme $U$ is a known quantity.
Hence, $\mu_{i,t}$ is the expected loss of arm $i$ at round $t$, conditioned on the fact that $i$ has already been played $\tau(i,t-1)$ times during the previous $t-1$ rounds.

As a shorthand, from now on we will use $\mu_{i}(\tau)$ to denote the expected loss of arm $i\in\K$ if pulled so far $\tau\in[T]$ times. 
Notice that when $\alpha_i = 0$ for all $i\in\K$ our setting reduces to the standard stochastic multi-armed bandit setting (e.g. \cite{auer2002finite}).\footnote
{
Observe that the stationary case can equivalently be recovered by setting $\rho = 0$, which is therefore redundant and ruled out by the condition $\rho \in (0,1]$.
} 
%Coherently to Section \ref{Sec:Related}, 
It is the decaying component $\frac{\alpha_i}{(1+\tau(i,t-1))^\rho}$ that makes this setting an instance of the {\em rested} bandit setting \cite{cella2020stochastic,kleinberg2018recharging,levine2017rotting,seznec2018rotting}, where the stochastic behavior of the arms depends on the actual policy  $I_1,\ldots, I_{t-1}$ that has so far been deployed during the game.

We compare a learning policy $\pi$ to the optimal policy that knows all parameters ${\{\alpha_i,\beta_i\}}_{i\in \K}$ in advance, and pulls from beginning to end the arm $\iT$ whose expected loss at time $T$ is smallest, i.e.,  
\begin{equation*}
    \iT = \arg\min_{i\in\K}\ \left(\frac{\alpha_i}{T^\rho} + \beta_i\right)~.
\end{equation*}
We define the {\em pseudo regret} of $\pi$ after $T$ rounds as
\begin{equation}\label{Eq:Regret}
    \reg{\pi}(\muu) = \mu_{\iout}\big(\tout\big) - \mu_{\iT}\big(T\big) ~,
\end{equation}
where 
\(
\tout=\tau(\iout,T)
\)
is the random variable counting the number of pulls of arm $\iout\in\K$ after $T$ rounds.
In the above, $\muu\in \{\mu_i\,:\,[T]\to[0,1]\}_{i \in \K }$ collectively denotes
the non-stationary environment generating the observed losses.
Our goal is to bound pseudo-regret $\reg{\pi}(\muu)$ with high probability, where the probability is w.r.t. the random draw of variables $X_{i,t}$ (and possibly the random choice of $I_1,\ldots,I_T$, and $\iout$).

\iffalse
The (pseudo)-{\em regret} of $\pi$ at the end of the $T$ rounds is defined as
%
\begin{equation}\label{Eq:Regret}
    \reg{\pi} = \mu_{\iout}\big(T(\iout, T)\big) - \mu_{\iT}\big(T\big)~
\end{equation}
where we use $\mu_{i,t} = \mu_i\big(\tau(i,t)\big)$ to denote the expected loss incurred by arm $i\in\K$ after $\tau(i,t)$ pulls.
%
We would like to bound regret (\ref{Eq:Regret}) with high probability over the realization of the loss random variables 
$X_{I_1,1},X_{I_2,2} \ldots, X_{I_{T},T}$.
\fi
Additionally, we adopt the notion of state $\underline{\tau} = (\tau_1,\tau_2,\dots.\tau_k)\in[T]^K$ to encode the case where, for all $i \in \K$, arm $i$ has been pulled $\tau_i$ times. Notice that when the learning policy is at state $(\tau,\dots,\tau)$, keep sampling all arms in a round-robin fashion (exploring) entails observing $K$ many samples with expected value $\mu_1(\tau), \dots, \mu_K(\tau)$ respectively, and ending up into state $(\tau+1,\dots,\tau+1)\in[T]^K$. Conversely, when the learning policy is at state $(\tau,\dots,\tau)$, then keep pulling the same arm $i\in\K$ for the remaining $T-K\tau$ rounds (exploiting) corresponds to reaching the furthest still reachable state where arm $i$ (which will then be the most pulled one) will have expected loss $\mu_{i}(T-(K-1)\tau)$.

A closer inspection of Eq.~(\ref{Eq:Regret}) reveals that,
unlike standard best-arm identification problems (e.g., \cite{audibert2010best,even2006action,gabillon2012best,kaufmann2016complexity}), our objective here is not limited to predicting which arm is best at the end of the game, but also to pull it as much as we can, that is, to single it out as early as possible.
This also entails that if the arm our policy $\pi$ pulls the most throughout the $T$ rounds is $i \neq \iT$, then it may be better for $\pi$ to output $\iout = i$ rather than $\iT$ itself, even if $\pi$ gets to know at some point the identity of $\iT$ and starts pulling it from that time onward. This is because if, say, for some $t_0$ close to $T$ we have $\tau(i,t_0) = t_0$ and $\tau(\iT,t_0) = 0$, then we may well have $\mu_{i,t_0} < \mu_{\iT,T-t_0}$, so that (\ref{Eq:Regret}) is smaller for $\iout = i$ than for $\iout = \iT$. 
In order to gather further insights, it is also worth considering the simple policy $\pi$ which selects all arms $T/K$ times, and then outputs the best arm $\iT$. According to~(\ref{Eq:Regret}), $\pi$ will still suffer significant regret, since it did not play $\iT$ often enough throughout
the $T$ rounds (that is, $\pi$ has explored \quotes{too much} on sub-optimal arms). We can thus claim that, thanks to the presence of the $\tout$ variable, our regret in (\ref{Eq:Regret}) is only seemingly non-cumulative.
\ignore{
a simple explore-than-commit baseline policy that in an initial phase of length $T_0$ plays each arm $i \in \K$ the same number $T_0/K$ of times in a round-robin fashion, and then commits to the arm that looks best at $T_0$, i.e., $\iout = \arg\min_{i \in \K} \mu_{i,T_0/K}$, where we assume for simplicity that we have direct access to $\mu_{i,T_0}$ for all $i \in \K$. Then we have
\[
\reg = \mu_{i,T(\iout, T)} - \mu_{\iT,T}~.
\]
}

Finally, observe that the average loss $\mu_{i,t}$ in (\ref{e:mu}) can be expressed as the linear combination $\mu_{i,t} = x_t^\top \theta^*_i$, where
$\theta^*_i = [\alpha_i,\beta_i]^\top$ is the unknown vector associated with arm $i$, and $x_t = [1/\tau(i,t-1)^\rho,1]^\top$ is the ``context" vector at time $t$. This might give the impression of some kind of linear contextual bandit (e.g., \cite{soare2014best}) in the best arm identification regime. Yet, this impression is erroneous, since in our problem $x_t$ is itself generated by the learning policy during its online functioning.

%Starting from this motivation we propose the following simple-regret definition:

%********** CG: I'm confused about what's in here **********\\
%I was describing the regret you already discussed, where it could be convenient to pull an arm $i\neq\iT$ as long as our policy can guarantee that $\mu_{i,T(i,T)} < \mu_{\iT,T(i,T)}$.  
%where $\ih\in\K$ is a random variable representing the arm selected by our learning policy after $T$ rounds. Our objective is to minimize the expected simple-regret $r_T=\E[\reg]$ where the expectation is with respect to the loss generation. We have $r_T \leq \sum_{i\in\K} (\Delta_{i,T} + \Gamma_{i,T(i,T)}) \Pro[\ih = i]$, where $\Delta_{i,T} = x_i(T) - x_{\iT}(T)$ and $\Gamma_{i,s} = x_i,$. The main difference with respect to the standard simple-regret is that in our case it is not enough to select the best-arm $\iT$ with high-probability, but we have also to control the term $\Gamma_{i, T(i,T)}$ which defines the gap given by having pulled arm $i$ only $T(i,T)$ times. 

%***********************************************************

\section{Main trade-offs and lower bound}\label{Sec:LowerBound}
In this section we provide a distribution-dependent lower bound for the proposed setting. This will also give us the chance to comment on the specific features of our learning task in terms of the main trade-offs a learning policy has to face.

%Given the heavy notation and to facilitate the readability, we also provide a corollary referring to a specific instance.\\ 
%We start by observing that,  
%to remark that, 
%differently from the stationary case \cite{RegretMAB}, our regret is non-cumulative. Moreover, 
%
\iffalse
********************
CG: commented out, we have already said that in the previous section
Unlike 
%differently from 
the fixed-budget best-arm identification problem \cite{audibert2010best}, our metric is different from the probability of not choosing the best arm $\iT$ after $T$ pulls. Indeed, in our setting, $\tout$ is a random variable which denotes the effective number of times the considered learning policy selects the arm $\iout$ during the $T$ interactions.
**********
\fi
%We now provide a characterization of the considered learning problem.

We start by defining the class of {\em arm-elimination policies} as those which periodically remove sub-optimal arms and keep sampling in a round-robin fashion\footnote
{
For simplicity, we restrict here to deterministic policies.
} 
the remaining arms across the rounds. The following simple fact holds.
%
%\vspace{-0.05in}
\begin{fact}\label{Fact:PermutationInvariance}
The regret incurred by an arbitrary policy $\pi$ is invariant to permutations of the chronological order of its actions. In fact, \eqref{Eq:Regret} only depends on the arm $\iout$ selected at the end, and the number of times $\tout$ that arm has been chosen during the $T$ rounds. Hence, for any $\pi$, there exists an arm-elimination policy $\pi'$ sharing the same regret (i.e., having the same pair $\iout,\tout$). 
%CG: why in expectation ??
%in expectation with respect to the loss generation).

%In the considered setting, when considering the regret, all learning policies result invariant to permutations of their actions. Indeed, the proposed regret definition (see Equation \ref{Eq:Regret}) is only a function of the finally picked arm $\iout$ and of the number of times $\tout$ it has been chosen during the $T$ interactions. Hence, given a generic learning policy $\pi'$ there will always exists an arm-elimination-like learning policy $\pi$ sharing the same regret (with the same pair $\iout,\tout$ in expectation with respect to the loss generation). We can then restrict the policies space to the class of arm-elimination policies.
\end{fact}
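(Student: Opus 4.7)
First I would unpack the regret definition~(\ref{Eq:Regret}): both the optimal arm $\iT=\arg\min_{i\in\K}(\alpha_i/T^\rho+\beta_i)$ and its expected loss $\mu_{\iT}(T)$ are fully determined by the environment $\muu$ and the horizon $T$, independently of the policy. Hence the only way $\pi$ enters~(\ref{Eq:Regret}) is through the pair $(\iout,\tout)$, which already establishes the ``in fact'' clause of the statement.

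For the invariance under chronological reordering, I would appeal to~(\ref{e:mu}): the conditional expected loss $\mu_i(\tau)$ depends only on the count $\tau$ of previous pulls of arm $i$, and not on the times at which those pulls occurred. Thus, given a realization of $\pi$ producing the action sequence $I_1,\ldots,I_T$ with output $\iout$, any permutation $\sigma$ of $[T]$ yields the same final per-arm pull counts $\tau(i,T)$, and in particular leaves $\tout=\tau(\iout,T)$ unchanged; by the observation in the first paragraph, the regret is preserved.

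To exhibit an arm-elimination policy $\pi'$ with identical regret, I would start from $\pi$'s realized counts $\tau_i:=\tau(i,T)$, sort them as $\tau_{(1)}\le\tau_{(2)}\le\cdots\le\tau_{(K)}$ (ties broken arbitrarily), and set $\tau_{(0)}:=0$. In phase $k=1,\ldots,K$, $\pi'$ pulls each of the $K-k+1$ currently surviving arms $\tau_{(k)}-\tau_{(k-1)}$ additional times in round-robin, then removes the arm whose target count is $\tau_{(k)}$; at the end of round $T$, $\pi'$ commits to the same arm $\iout$ as $\pi$. The total number of pulls equals $\sum_{k=1}^{K}(K-k+1)(\tau_{(k)}-\tau_{(k-1)})=\sum_i\tau_i=T$, so the schedule is valid, and by construction $\pi'$ is an arm-elimination policy whose per-arm counts at time $T$ coincide with those of $\pi$. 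In particular $\pi'$ achieves the same $(\iout,\tout)$, so $\reg{\pi'}(\muu)=\reg{\pi}(\muu)$.

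The argument is essentially bookkeeping, so there is no serious obstacle. The only point worth flagging is that when $\iout$ is not a most-pulled arm under $\pi$, it gets eliminated before round $T$ in the schedule of $\pi'$; this is consistent with the definition of arm-elimination, which constrains the pull schedule but places no restriction on which arm is ultimately output. For adaptive $\pi$, the construction is pathwise (couple $\pi'$ to the realized counts of $\pi$), and measurability is inherited from $\pi$.
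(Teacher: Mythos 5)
Your proof is correct and follows essentially the same reasoning the paper gives inline with the Fact: the regret \eqref{Eq:Regret} is a function of $(\iout,\tout)$ alone, expected losses in \eqref{e:mu} depend only on per-arm pull counts, and an arm-elimination schedule can realize any achievable count profile (your telescoping check $\sum_{k}(K-k+1)(\tau_{(k)}-\tau_{(k-1)})=T$ is the right bookkeeping). The explicit phase construction and the remark on coupling $\pi'$ to $\pi$'s realized counts are a reasonable concretization of what the paper leaves implicit, and are consistent with the paper's restriction to deterministic policies.
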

We can therefore restrict our lower bound investigation to arm-elimination policies.
%The real advantage induced by the above fact 
An advantage of this restriction is a more convenient characterization of the state space $\{\underline{\tau}\}$ associated with the learning problem. The size of the state space is clearly of the form $T^K$. 

Another relevant aspect of our problem 
%that should be observed 
is that, based on (\ref{e:mu}) and (\ref{Eq:Regret}),
%Than the uniform sampling would continue on the remaining arms until the next elimination occurs, and so on and so forth. Analogously, 
%
for each given state $(\tau_1,\dots,\tau_K)\in[T]^K$, there are at most $K$ many candidate optimal and still reachable states that any policy could end up to. These are specifically the $K$ alternative states that the learning policy at hand would reach by committing to one of the $K$ arms for all the remaining 
%$T-\sum_{i\in\K}\tau_i$ 
rounds. All other states (which are exponential many) can easily be seen to be sub-optimal.
%due to the expected loss definition.

Before moving to the main result of this section (the regret lower bound), we would like to give an additional characterization of the considered class of policies. The missing component which gives a well-specified policy is the condition governing the arm elimination.
%{\bf CG:} Above unclear, why referring to "arm elimination" ?
Since expected losses (\ref{e:mu}) are non-increasing, and given the regret criterion (\ref{Eq:Regret}), once a policy is confident that sticking to an arm would give a smaller expected loss than the one associated with the last pull,
%{\bf CG:} what is the ``currently incurred one" ?
this policy might be tempted to eliminate all the other arms. In the next example we show that operating this way can be sub-optimal.

\begin{example}\label{Rem:Counterexample}
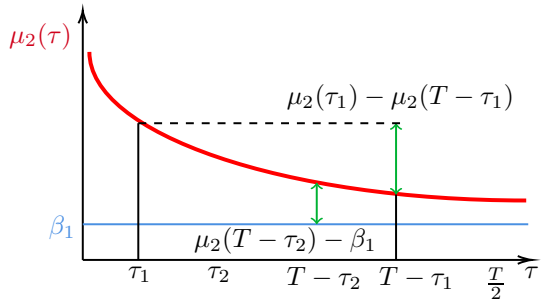
\begin{figure}[t]
    \centering
    \tikzset{every picture/.style={line width=0.75pt}} %set default line width to 0.75pt        
\begin{tikzpicture}[x=0.5pt,y=0.5pt,yscale=-0.9,xscale=1]
%uncomment if require: \path (0,255); %set diagram left start at 0, and has height of 255

%Straight Lines [id:da08853768666684081] 
\draw    (48,220) -- (386,220) ;
\draw [shift={(388,220)}, rotate = 180] [color={rgb, 255:red, 0; green, 0; blue, 0 }  ][line width=0.75]    (10.93,-3.29) .. controls (6.95,-1.4) and (3.31,-0.3) .. (0,0) .. controls (3.31,0.3) and (6.95,1.4) .. (10.93,3.29)   ;
%Straight Lines [id:da8596191334497869] 
\draw    (48,220) -- (48,12) ;
\draw [shift={(48,10)}, rotate = 450] [color={rgb, 255:red, 0; green, 0; blue, 0 }  ][line width=0.75]    (10.93,-3.29) .. controls (6.95,-1.4) and (3.31,-0.3) .. (0,0) .. controls (3.31,0.3) and (6.95,1.4) .. (10.93,3.29)   ;
%Shape: Boxed Line [id:dp04689125644784209] 
\draw [color={rgb, 255:red, 74; green, 144; blue, 226 }  ,draw opacity=1 ]   (48,190) -- (385,190) ;
%Shape: Arc [id:dp26865431185022604] 
\draw  [draw opacity=0][line width=1.5]  (383.46,170) .. controls (383.31,170) and (383.15,170) .. (383,170) .. controls (200.75,170) and (53,114.04) .. (53,45) -- (383,45) -- cycle ; \draw  [color={rgb, 255:red, 255; green, 0; blue, 0 }  ,draw opacity=1 ][line width=1.5]  (383.46,170) .. controls (383.31,170) and (383.15,170) .. (383,170) .. controls (200.75,170) and (53,114.04) .. (53,45) ;

%\draw    (225,220) -- (225,155) ; % Line starting in T-tau2
\draw [<->, mycolor]   (285,165) -- (285,105) ; % Line starting in T-tau1
\draw  (285,165) -- (285,220) ; % Normal line starting in T-tau1
\draw    (90,220) -- (90,105) ; % Line starting in \tau1
%\draw    (150,220) -- (150,135) ; % Line starting in \tau2
\draw  [<->, mycolor]  (225,155) -- (225,190); % Line starting in T-\tau2
\draw  [dashed]  (90,105) -- (290,105) ; % Horizontal dashed line

% Text Node
\draw (-10,18) node [anchor=north west][inner sep=0.75pt]  [color={rgb, 255:red, 208; green, 2; blue, 27 }  ,opacity=1 ]  {$\mu_{2}( \tau )$};
% Text Node
\draw (20,180) node [anchor=north west][inner sep=0.75pt]    {$\textcolor[rgb]{0.29,0.56,0.89}{\beta_1}$};
% Text Node
\draw (80,223.4) node [anchor=north west][inner sep=0.75pt]    {$\tau_1 $};
\draw (140,223.4) node [anchor=north west][inner sep=0.75pt]    {$\tau_2 $};
\draw (200,223.4) node [anchor=north west][inner sep=0.75pt]    {$T - \tau_2 $};
\draw (270,223.4) node [anchor=north west][inner sep=0.75pt]    {$T - \tau_1 $};
\draw (350,223.4) node [anchor=north west][inner sep=0.75pt]    {$\frac{T}{2}$};
\draw (380,223.4) node [anchor=north west][inner sep=0.75pt]    {$\tau $};
\draw (130,190) node [anchor=north west][inner sep=0.75pt] {$\mu_2(T - \tau_2) - \beta_1 $};
\draw (200,70) node [anchor=north west][inner sep=0.75pt] {$\mu_2(\tau_1) - \mu_2(T - \tau_1)$};
\end{tikzpicture}
    \vspace{-0.1in}
    \caption{Expected losses associated with the arms in Example \ref{Rem:Counterexample}.}
    \label{Fig:Counterexample}
\end{figure}
Let us consider the specific instance of our problem with $K=2$ arms whose expected losses are sketched in Figure \ref{Fig:Counterexample}. Whereas the first arm is stationary $\mu_1(\tau) = \beta_1 $, the second is not, $\mu_2(\tau) = \frac{\alpha_2}{\tau^\rho} + \beta_1$. At state $\underline{\tau}_1 = (\tau_1,\tau_1)$ it may occur that the $\tau_1$ observations associated with arm $2$ are enough to realize that $\mu_2(T-\tau_1)<\mu_2(\tau_1)$. 
%
%{\bf CG:} what is $\tau$ here and below, is it $\tau_1$ ?
Hence the learning policy knows that if it kept sampling arm $2$ for the remaining $T-2\tau_1$ pulls it would achieve a smaller (expected) loss compared to $\mu_2(\tau_1)$. The same would not hold for the other arm, as it is stationary.

Let us now denote by $\tau_2$ the number of pulls it takes to figure out that $\beta_1<\mu_2(T-\tau_2)$. It could be the case that $\tau_2 > \tau_1$ (that is, as in Figure \ref{Fig:Counterexample}, we have $\mu_2(T-\tau_2)-\beta_1 < \mu_2(\tau_1) - \mu_2(T - \tau_1)$). In order to maximize $\tout$ (so as to minimize regret (\ref{Eq:Regret})) a naive policy might eliminate arm $1$ after $\tau_1$ observations. This would translate into choosing the wrong value of $\iout$, hence clearly incurring a regret. Conversely, a smarter policy that keeps exploring up to state $(\tau_2,\tau_2)$ would return $\iout = 1$ and yield $\tout = T - \tau_2$. Notice that, thanks to the stationary nature of the optimal arm, the regret incurred by this smarter policy is indeed zero.
\end{example} 
%
%{\bf CG:} refer to the figure in the above explaination ! Moreover, in Fig. 1 also tag the first gap at $T-\tau_2$.
All in all, the above observations help better understand the structure of our problem, which will be useful in all technical proofs (see the appendix).

We can now turn our attention to the lower bound. In doing so, we generalize the results in \cite{bubeck2013bounded}, which in turn adopts a hypothesis testing argument that hinges on a lower bound for the minimax risk of hypothesis testing (e.g., \cite{tsybakov2008introduction}, Chapter 2). Notice that the classical lower bound result for stationary stochastic bandits \cite{lai1985asymptotically} cannot easily be adapted here since, being asymptotic in nature, that result tends to lose the non-stationary component of our expected losses (\ref{e:mu}), and thus the cumulated effect of this non-stationarity on the $\tout$ variable.

%In our case, minimizing the regret $\reg{\pi}$ in a asymptotic scenario would translate into loosing the non-stationary contribute, which in turn contains the cumulative effect specified by the $\tout$ random variable.

%The peculiarity distinguishing our problem is that given a specific state there exists a {\em single path} connecting it from the initial one $\mathbf{0}\in[T]^K$. In particular, at the beginning all arms would be uniformly pulled until the arms corresponding to $\arg\min_{i\in\K}\tau_i$ will be removed. 

%we can now focus on the lower bound result. One may try to exploit the known lower bound technique built for the stationary stochastic bandit setting \cite{lai1985asymptotically}. The main obstacle is that that holds only asymptotically. In our case, minimizing the regret $\reg{\pi}$ in a asymptotic scenario would translate into loosing the non-stationary contribute, which in turn contains the cumulative effect specified by the $\tout$ random variable.

As done in \cite{bubeck2013bounded}, for all arms and all possible number of pulls, we consider all families of loss distributions $\{\Pro_\mu\}$, indexed by their expected value $\mu$, and such that $KL(\Pro_{\mu}, \Pro_{\mu'}) = C (\mu - \mu')^2$ for some absolute constant $C>0$ (e.g., in the case of normal distributions, $KL(\mathcal{N}(\mu,\sigma), \mathcal{N}(\mu',\sigma))=\frac{1}{2}(\mu-\mu')^2$).

%Starting from this result we can obtain the following lower bound on the regret $\reg{\pi}$.\\

In the sequel, we use $\tsub=T-\tout$ to denote the number of rounds spent by pulling all arms different from $\iout$. Additionally, we denote by $\Pro_{\muu(\tau)}=\Pro_{\mu_1(\tau)}\otimes\dots\otimes\Pro_{\mu_K(\tau)}$ the product distribution that generates the losses from $\Pro_{\mu_i(\tau)}$ when pulling arm $i\in\K$ for the $\tau$-th time.
The result that follows restricts to the two arm case,\footnote
{
We believe that restricting to the two arm case helps better elucidate the nature and trade-offs in our problem. We conjecture that a similar but considerably more involved result can be shown for $K$ arms.
}
and delivers a bound on the regret that holds in expectation over the random draw of the losses.

%Due to space limitations, all proofs are contained in the supplementary material.
%
\begin{theorem}\label{Th:RLB} Let $\Pro_{\muu(\tau)} = \Pro_{\mu_1(\tau)}\otimes \Pro_{\mu_2(\tau)}$ be defined by distributions whose expected values are $\mu_1(\tau) = \frac{\ab}{\tau^\rho} + \beta$ and $\mu_2(\tau) = \mu_1(\tau) + \Delta$, respectively,
%for any $\tau\in[T]$, 
where $\Delta > 0$ is an unknown but fixed constant. Additionally, let $\Pro_{\muu'(\tau)} = \Pro_{\mu'_1(\tau)}\otimes \Pro_{\mu'_2(\tau)}$ be another product distribution, whose expected value components are $\mu'_1(\tau) = \mu_1(\tau)$ and $\mu'_2(\tau) = \mu_1(\tau) - \Delta$. Then, for any policy $\pi$, and any horizon $T\geq 1$, the following holds:
\begin{equation*}
        \max\Big\{\E\big[\reg{\pi}(\muu)\big], \E\big[\reg{\pi}(\muu')\big]\Big\} \geq \ab \bigg( \frac{1}{(T-\tsub)^\rho} - \frac{1}{T^\rho}\bigg)~,
\end{equation*}
where $\tsub$ is the smallest $\tau \in [T]$ which is strictly larger than
    %\[
    %\min\bigg\{\frac{T}{2},\, \frac{ \log T}{C\Delta^2},\, \frac{\log T}{C\ab^2}\frac{(T-\tau)^{2\rho}(T-\tau-1)^{2\rho}}{[(T-\tau)^{\rho}-(T{-}\tau-1)^{\rho}]^2}\bigg\}~.
    %\]
    \[
    \min\bigg\{\frac{T}{2},\, \frac{ \log T}{C\Delta^2},\, \frac{\log T}{C\ab^2}(T-\tau-1)^{2\rho+2}\bigg\}~.
    \]
\end{theorem}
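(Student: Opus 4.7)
The strategy is to combine a Bretagnolle--Huber change of measure with Jensen's inequality applied to the convex regret function, in the spirit of \cite{bubeck2013bounded}.

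By Fact~\ref{Fact:PermutationInvariance} I may restrict to deterministic arm-elimination policies. For $K=2$ such a policy is characterized by a (random) stopping time $\tau\in[T]$ and an output arm $\iout\in\{1,2\}$: both arms are pulled $\tau$ times during exploration, after which the remaining $T-2\tau$ rounds are spent on $\iout$, giving it $T-\tau$ total pulls. Let $A=\{\iout=1\}$. Direct substitution in~(\ref{Eq:Regret}) gives, under $\muu$ (where $\iT=1$),
\[
\reg{\pi}(\muu)=\ab\Bigl(\tfrac{1}{(T-\tau)^\rho}-\tfrac{1}{T^\rho}\Bigr)+\Delta\,\I\{A^c\},
\]
and symmetrically under $\muu'$ (where $\iT=2$),
\[
\reg{\pi}(\muu')=\ab\Bigl(\tfrac{1}{(T-\tau)^\rho}-\tfrac{1}{T^\rho}\Bigr)+\Delta\,\I\{A\}.
\]
Since $x\mapsto (T-x)^{-\rho}-T^{-\rho}$ is convex on $[0,T)$, Jensen's inequality yields
\[
\E[\reg{\pi}(\muu)]\geq \ab\Bigl(\tfrac{1}{(T-\E_{\muu}[\tau])^\rho}-\tfrac{1}{T^\rho}\Bigr)+\Delta\,\Pro_\muu(A^c),
\]
and analogously under $\muu'$. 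It therefore suffices to show that, for any policy, either $\max\{\E_\muu[\tau],\E_{\muu'}[\tau]\}\geq\tsub$ (in which case the Jensen term already matches the claim) or the misidentification probabilities $\Pro_\muu(A^c),\Pro_{\muu'}(A)$ are large enough that the $\Delta$ term alone does.

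For the latter I would invoke a Bretagnolle--Huber change of measure. The environments $\muu,\muu'$ agree except on arm $2$, whose mean is shifted by the constant $2\Delta$ uniformly in the pull index, so the standard divergence decomposition combined with the assumption $KL(\Pro_\mu,\Pro_{\mu'})=C(\mu-\mu')^2$ gives $KL(\Pro_\muu,\Pro_{\muu'})=4C\Delta^2\,\E_\muu[N_2]$, and hence
\[
\Pro_\muu(A^c)+\Pro_{\muu'}(A)\geq\tfrac{1}{2}\exp\!\bigl(-4C\Delta^2\,\E_\muu[N_2]\bigr),
\]
where $N_2$ denotes the total number of arm-$2$ pulls. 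Under arm-elimination, $N_2=\tau\,\I\{A\}+(T-\tau)\,\I\{A^c\}$, so $\E_\muu[N_2]$ is controlled by $\E_\muu[\tau]$ up to a correction of order $T\,\Pro_\muu(A^c)$, coupling the four quantities in the two displays above.

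The three terms inside the min that defines $\tsub$ should then emerge as three regimes in a case analysis of this coupled system: (i) the trivial cap $\tau\leq T/2$ intrinsic to two-arm round-robin exploration; (ii) the classical identification threshold $\log T/(C\Delta^2)$, obtained by tuning the Bretagnolle--Huber exponent so that the $\Delta$ contribution alone matches $\ab((T-\tsub)^{-\rho}-T^{-\rho})$; and (iii) the self-consistency threshold $\log T/(C\ab^2)(T-\tau-1)^{2\rho+2}$, arising when the Jensen contribution and the Bretagnolle--Huber contribution must simultaneously meet the claimed bound. The hardest part will be making this third term rigorous: its dependence on $\tau$ forces the fixed-point style definition of $\tsub$, so one has to fix a candidate value, require the Jensen lower bound to dominate the $\Delta$-regret forced by Bretagnolle--Huber at that value, and solve for the smallest admissible $\tsub$. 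Carefully tracking the interplay between $\ab$, $\Delta$, $\rho$, and $T$ in that balancing is where the bookkeeping becomes delicate.
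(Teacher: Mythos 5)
Your skeleton matches the paper's: both restrict to arm-elimination policies via Fact~\ref{Fact:PermutationInvariance}, both reduce the regret to controlling the number of sub-optimal pulls, and both invoke the same change-of-measure tool (the paper uses the Tsybakov two-point testing bound, which is the Bretagnolle--Huber inequality you cite). Your explicit decomposition $\reg{\pi}(\muu)=\ab\bigl((T-\tau)^{-\rho}-T^{-\rho}\bigr)+\Delta\,\I\{A^c\}$ and the Jensen step are fine, and the balancing of $\E[\tau]$ against $\Pro_\muu(A^c)+\Pro_{\muu'}(A)$ is exactly how the paper obtains the term $\frac{\log T}{C\Delta^2}$ (the paper writes it as $\min_{x}\bigl[x+\frac{\tau}{2}\exp(-4C\Delta^2x)\bigr]\geq\frac{1}{8C\Delta^2}\log(C\Delta^2\tau/4)$). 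The cap $T/2$ is trivial in both treatments.

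The gap is in your item (iii). You attribute the third threshold $\frac{\log T}{C\ab^2}(T-\tau-1)^{2\rho+2}$ to a ``self-consistency'' balancing of the Jensen contribution against the Bretagnolle--Huber contribution of the \emph{same} $\Delta$-test. That balancing cannot produce this term: any threshold extracted from the test separating $\muu$ from $\muu'$ carries a dependence on $\Delta$, whereas the third term depends only on $\ab$, $\rho$ and $T$. In the paper this term comes from a \emph{second, separate} hypothesis test. At state $(\tau,\tau)$ the policy effectively faces three alternatives --- commit to arm 1, commit to arm 2, or explore one more round --- with expected losses $\mu_1(T-\tau)$, $\mu_2(T-\tau)$, and $\min\{\mu_1(T-\tau-1),\mu_2(T-\tau-1)\}$. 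Deciding whether one more exploration round is worthwhile is itself a testing problem with mean gap $\DeltaTilde_\tau=\ab\bigl((T-\tau-1)^{-\rho}-(T-\tau)^{-\rho}\bigr)\approx\ab\rho(T-\tau)^{-\rho-1}$, and rerunning the identical change-of-measure argument with $\Delta$ replaced by $\DeltaTilde_\tau$ yields a forced number of exploratory pulls of order $\frac{\log T}{C\DeltaTilde_\tau^2}\approx\frac{\log T}{C\ab^2\rho^2}(T-\tau)^{2\rho+2}$, which is the third term (its dependence on $\tau$ is what makes the definition of $\tsub$ implicit). Without introducing this second test your plan would only recover the first two entries of the $\min$, so the claimed bound would be incomplete whenever the third entry is the smallest.
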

%
%The proof of Theorem \ref{Th:RLB} is a bit involved, and can be found in the supplementary material.
In the proof (see %supplemental
the appendix), one can also find the exact expression for $\tsub$, which is slightly more complex that the one given above. 
The main idea behind the proof is that in the considered learning problem, we have two quantities characterizing the lower bound on the number of sub-optimal pulls $\tsub$. The first one is associated with $\iout$, and is of the order of $\frac{1}{\Delta^2}$. The second one is induced by the objective of minimizing the incurred expected loss at $\tout$. %and is associated to the $\tout$ random variable. 
The main point here is that exploring towards arm $\iout$ is worthwhile only if it does not cause a higher incurred loss $\mu_{\iout}(\tout)$.

In the proof, we first show a lower bound on $\tsub$ of the form $1/\Delta^2$ even if all parameters $\ab,\beta,\Delta$ are known (as in the standard stationary case). Then, we show that the sole knowledge of $\Delta$ (already) causes a rescaling of the lower bound of order $\log T$. 

%Both these steps rely on Lemma \ref{Le:Tsyabkov} and yield non-asymptotic results.\\ 

We would like to emphasize that, if we let $\Delta\to0$ (that is, the two arms are less and less statistically distinguishable), and consider a large enough time horizon $T$, an optimal strategy for our regret minimization problem is by no means to pull both arms an equal ($T/2$) number of times. Rather, an optimal strategy would commit to one of the two arms as soon as it is confident enough on which of them has the smaller loss (at any reachable state), unless trying to determine the best arm causes a bigger regret than immediately committing to any of the two.

%{\bf CG:} The above sentence could be improved. What does ``learning" exactly mean here ? Also, ``smaller loss" at which round ? Always since is dominated in the considered instance.

The quantity $\tsub$ (at least in the two-arm case) will play a central role in characterizing the statistical complexity of our learning problem.

A better interpretation of the bound contained in Theorem \ref{Th:RLB} is provided by the below corollary, 
%(whose proof is in the appendix) 
where we set as a relevant example $\rho= 1/2$. 
%
%We provide now a more specific and readable result.
%
\begin{corollary}\label{Co:RLB}
Let the same assumptions as in Theorem \ref{Th:RLB} hold with 
%Additionally, let us restrict to the case where 
$\rho=1/2$. Then, for any policy $\pi$, and any horizon $T\geq 1$, the following holds:
\vspace{-0.1in}
\begin{equation*}
        \max\Big\{\E\big[\reg{\pi}(\muu)\big], \E\big[\reg{\pi}(\muu')\big]\Big\} \geq \ab \bigg( \frac{1}{\sqrt{T-\tsub}} - \frac{1}{\sqrt{T}}\bigg)~,
\end{equation*}
where $\tsub$ is the smallest $\tau\in[T]$ which is strictly greater than
\begin{align*}
\tsub = \min\bigg\{\frac{T}{2},\ceil*{\frac{\log T}{C \Delta^2}},\frac{\log T}{C\ab^2}(T-\tau-1)^{3} \bigg\}~. %&\quad\text{ if }\frac{1}{\Delta^2} \leq \frac{c\,T^3}{\ab^2}\\
%\tsub = \min\bigg\{\frac{T}{2},\frac{(T-1)^3}{(T-1)^2 + \frac{C \ab^2}{4 \log T}}\bigg\} &\quad\text{ if }\frac{1}{\Delta^2} > \frac{c\,T^3}{\ab^2}.
\end{align*}
\end{corollary}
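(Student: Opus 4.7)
The plan is straightforward: Corollary~\ref{Co:RLB} is purely a specialization of Theorem~\ref{Th:RLB} to the case $\rho = 1/2$, so I would derive it by direct substitution rather than by a fresh probabilistic or information-theoretic argument. All the hard work (the two-hypothesis construction with $\muu$ and $\muu'$, the KL-based testing lower bound, and the coupling between the number of sub-optimal pulls $\tsub$ and the regret incurred at $\tout$) has already been carried out in Theorem~\ref{Th:RLB}, and none of those steps depend on the particular value of $\rho \in (0,1]$.

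First, I would substitute $\rho = 1/2$ on the right-hand side of the lower bound in Theorem~\ref{Th:RLB}. The quantity $\frac{1}{(T-\tsub)^\rho} - \frac{1}{T^\rho}$ collapses to $\frac{1}{\sqrt{T-\tsub}} - \frac{1}{\sqrt{T}}$, which matches the inequality claimed in the corollary. Next, I would substitute $\rho = 1/2$ in the expression defining $\tsub$: the exponent $2\rho + 2$ in the third term of the minimum becomes exactly $3$, yielding $\frac{\log T}{C \ab^2}(T-\tau-1)^3$ as in the corollary statement. The first two terms $T/2$ and $\log T/(C \Delta^2)$ do not depend on $\rho$; the ceiling $\lceil \cdot \rceil$ around the latter in the corollary is a harmless notational convenience, since $\tsub$ is in any case defined as the smallest integer strictly greater than the minimum, and rounding the real-valued quantity up beforehand changes the resulting $\tsub$ by at most one.

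There is no real obstacle in the corollary itself; the only point worth being careful about when reading off these expressions from Theorem~\ref{Th:RLB} is that the third term of the min is defined implicitly (both sides depend on $\tau$ via $(T-\tau-1)^{2\rho+2}$), so one should keep track that after substitution it is the same implicit fixed-point-style condition $\tau \geq \frac{\log T}{C\ab^2}(T-\tau-1)^{3}$ that determines $\tsub$, not a closed-form number. The corollary is stated separately to highlight the most interesting case $\rho = 1/2$, which comes from the model-selection motivation where the generalization error of a base learner scales like $1/\sqrt{\tau}$ in the training set size $\tau$, and to exhibit the cubic scaling of the non-stationary term that makes the interplay between the $1/\Delta^2$ identification cost and the decay of $\mu_i(\cdot)$ transparent.
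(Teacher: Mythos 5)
Your proposal is correct in the sense that the corollary's statement is literally Theorem~\ref{Th:RLB} evaluated at $\rho=1/2$ (the exponent $2\rho+2$ becomes $3$, the bound becomes $\ab(1/\sqrt{T-\tsub}-1/\sqrt{T})$, and the ceiling on $\log T/(C\Delta^2)$ is indeed immaterial), so pure substitution into the displayed form of the theorem does yield the claimed result. However, the paper's own proof of the corollary takes a slightly different and more substantive route: it returns to the \emph{exact} expression for $\tsub$ derived at the end of the proof of Theorem~\ref{Th:RLB}, which is stated in terms of $\DeltaTilde_\tau = \ab\bigl(\frac{1}{(T-\tau-1)^\rho}-\frac{1}{(T-\tau)^\rho}\bigr)$ rather than in terms of $(T-\tau-1)^{2\rho+2}$, and then uses the elementary inequalities $\frac{1}{2\sqrt{x}}<\sqrt{x}-\sqrt{x-1}<\frac{1}{2\sqrt{x-1}}$ (valid for $x\ge1$) to sandwich $\frac{1}{\sqrt{T-\tau-1}}-\frac{1}{\sqrt{T-\tau}}$ between $\frac{1}{2\sqrt{(T-\tau)^3}}$ and $\frac{1}{2\sqrt{(T-\tau-1)^3}}$, which is where the cubic scaling actually comes from. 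In other words, the displayed form of $\tsub$ in Theorem~\ref{Th:RLB} is itself an informal simplification (the paper explicitly says the exact expression is ``slightly more complex''), and the content of the corollary's proof is precisely the rigorous conversion of $1/\DeltaTilde_\tau^2$ into the $(T-\tau-1)^3$ form for $\rho=1/2$. Your substitution argument buys brevity but implicitly assumes that the $(T-\tau-1)^{2\rho+2}$ simplification has already been justified for general $\rho$; the paper's route makes that justification explicit in the one case where it is needed, at the cost of carrying slightly messier constants (e.g., $\frac{(T-\tau-1)^3}{2C\ab^2}\log\bigl(\frac{\tau C\ab^2}{16(T-\tau)^3}\bigr)$ rather than $\frac{\log T}{C\ab^2}(T-\tau-1)^3$). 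If you intend your proof to be self-contained relative to the appendix rather than to the theorem's display, you should add the square-root sandwich step.
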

%
%The proof can be found in the appendix.
%
%derives from a combination of Theorem \ref{Th:RLB} with some arithmetic calculations and can be found in the supplementary material.
%
%Furthermore, notice that the considered instance is strongly coherent with the original online model selection motivation. Indeed, it reminds the Chernoff-like generalization bounds to the expected risk \cite{vapnik2013nature}.
%
These results help elucidate the novel trade-off characterizing the proposed non-stationary bandit problem. In fact, the variable $\tsub$, which can be interpreted as counting the number of pulls of the sub-optimal arm, is not only a function of $1/\Delta^2$, as is for the stationary case. Here, $\tsub$ also depends on the relative size of $T$, $\ab$ and $1/\Delta^2$.
%Conversely, it considers an exploration based on the $O(T^3/\ab^2)$ ratio. 
The main intuition behind this result is that, the more the samples, the better a learning policy $\pi$ may understand the shape of the arms' expected loss functions. 
%{\bf CG:} what does ``the shape of its" refer to above ?
In particular, at a given round, $\pi$ need not realize which arm is optimal (the arms may or may not be statistically equivalent), still $\pi$ might realize that, the more it keeps exploring the higher the expected loss it incurs at $\tout$.

\section{
%Statistical problem: 
Estimation of parameters}\label{Sec:ParamEst}
In order to minimize the regret $\reg{\pi}(\muu)$ any reasonable policy $\pi$ has to be able to estimate, for all arms $i \in \K$, the associated expected loss $\mu_i(\cdot)$, and it has to do so at any still reachable state where arm $i$ will be pulled $\tout$ times. To this effect, we now introduce two 
%linearly independent and 
statistically independent estimators. 
Upon pulling arm $i\in\K$ for $2\tau$ times, we define
\begin{align}\label{Eq:Estimators}
    &\Xh_{i,\tau} = \frac{1}{\tau} \sum_{s=1}^{\tau} X_i(s)~,
    &\Xt_{i,\tau} = \frac{1}{\tau} \sum_{s= \tau+1}^{2\tau} X_i(s)~,
\end{align}
where $X_{i,s}$ denotes the loss incurred by arm $i$ after having pulled it $s$ times ($\E[X_{i,s}]=\mu_{i,s}$). Notice, that due to the way we have defined $\mu_{i,s}$,  
%($f(x)=\frac{1}{x^\rho}$) defining the expected losses, 
the above estimators are empirical averages of {\em independent} but non-identically distributed random variables, the independence deriving from the fact that pulling one arm does not influence the distribution of the others. Moreover, because of the time decay, the expectation of $\Xh_{i,\tau}$ cannot be smaller than the expectation of $\Xt_{i,\tau}$.
%the first expected value would be strictly greater than the second one's. Furthermore, due to the shape of the considered non-stationarity the obtained estimators will be linearly independent between each other.

We combine these two estimators together with standard concentration inequalities to
%For each considered arm $i\in\K$, using Chernoff-Hoeffding's inequality over them to 
derive a joint estimator for $(\alpha_i,\beta_i)$. Since the two estimators are non-redundant, this allows us to come up with estimators for $\alpha_i$ and $\beta_i$ individually. 

%and $\beta_i$ does not retrieve interesting estimators. Indeed, due to the high range of their value caused by the non-stationary term $\frac{\alpha_i}{(1+T)^\rho}$, the obtained confidence intervals do not reduce with the number of samples. The same does not occur with Bernstein's inequality, indeed instead of dealing with the range we have to control their variance.

Using Bernstein's inequality,\footnote
{
It is worth mentioning in passing that the standard Hoeffding inequality delivers vacuous estimators, as the range of the random variables $X_{i,t}$ alone does not carry enough information about the concentration properties of $\Xh_{i,\tau}$ and $\Xt_{i,\tau}$.  
} 
we can derive confidence bounds around $\Xh_{i,\tau}$ and $\Xt_{i,\tau}$ as functions of $\beta_i$ and $\alpha_i$. Specifically, for each arm $i\in\K$, number of pulls $2\tau\in[T]$, the expectation $\E[\Xh_{i,\tau}]$ is contained with probability at least $1-\delta$ in the interval $[\Xh_{i,\tau}- \text{CB}_{\Xh,\tau}(\delta), \Xh_{i,\tau} + \text{CB}_{\Xh,\tau}(\delta)]$ where
\begin{equation*}
    \text{CB}_{\Xh,\tau}(\delta) = \left(\sqrt{U} + 1 \right)^2 \sqrt{\frac{2}{\tau}\log\frac{1}{\delta}} + \frac{(U+1)\log\frac{1}{\delta}}{\tau}~. %&\stackrel{\rho\neq1}{\leq} \left(\sqrt{\frac{\alpha_i}{t^{\rho+1}}} + \sqrt{\frac{\beta_i}{t}}\right) \sqrt{\log\delta^{-1}} %\\
    %&\stackrel{\rho=1}{\leq} \left(\frac{\sqrt{\log(t)\alpha_i}}{t} + \sqrt{\frac{\beta_i}{t}}\right) \sqrt{\log\delta^{-1}}
\end{equation*}
Likewise, 
%for each arm $i\in\K$, number of interactions $2\tau\in[T]$, the value 
$\E[\Xt_{i,\tau}]$ is contained with probability at least $1-\delta$ in the interval $[\Xt_{i,\tau}- \text{CB}_{\Xt,\tau}(\delta), \Xt_{i,\tau} + \text{CB}_{\Xt,\tau}(\delta)]$ where
\begin{equation*}
    \text{CB}_{\Xt,\tau}(\delta) =  \left(\sqrt{U} + 1 \right)^2 \sqrt{\frac{2}{\tau}\log\frac{1}{\delta}} + \frac{(U+1)\log\frac{1}{\delta}}{\tau}~. %&\stackrel{\substack{\rho+\gamma\neq1\\\gamma\neq1}}{\leq} \left(\sqrt{\frac{\alpha_i}{t^{\rho+\gamma+1}}} + \sqrt{\frac{\beta_i}{t^{\gamma+1}}}\right) \sqrt{\log\delta^{-1}} \\
    %&\stackrel{\rho+\gamma=1}{\leq} \left(\frac{\sqrt{\log(t)\alpha_i}}{t} + \sqrt{\frac{\beta_i}{t^{\gamma+1}}}\right) \sqrt{\log\delta^{-1}} %\\
    %&\stackrel{\gamma=1}{\leq} \left(\sqrt{\frac{\alpha_i}{s^{\rho+\gamma+1}}} + \frac{\sqrt{\log(t)\beta_i}}{t}\right) \sqrt{\log\delta^{-1}}
\end{equation*}
Starting from these definitions we can build the following set of inequalities
%
%%%%
\iffalse
\vspace{-0.1in}
\begin{align*}
\beta_i + \frac{\alpha_i}{\tau}\sum_{s=1}^\tau \frac{1}{s^\rho} &- \text{CB}_{\Xh,\tau}(\delta) \leq \Xh_{i,\tau}\\[-2mm]
&\leq  \beta_i + \frac{\alpha_i}{\tau}\sum_{s=1}^\tau \frac{1}{s^\rho} + \text{CB}_{\Xh,\tau}(\delta)\\[-2mm]
\beta_i + \frac{\alpha_i}{\tau}\sum_{s=\tau+1}^{2\tau} \frac{1}{s^\rho} &- \text{CB}_{\Xh,\tau}(\delta) \leq \Xt_{i,\tau} \\[-2mm] 
&\leq  \beta_i + \frac{\alpha_i}{\tau}\sum_{s=\tau+1}^{2\tau} \frac{1}{s^\rho} + \text{CB}_{\Xh,\tau}(\delta) 
%%%%
\fi
\begin{align*}
    \E[\Xh_{i,\tau} ] - \text{CB}_{\Xh,\tau}(\delta)  \leq \Xh_{i,\tau} \leq 
\E[\Xh_{i,\tau} ] + \text{CB}_{\Xh,\tau}(\delta) \\
    \E[\Xt_{i,\tau} ] - \text{CB}_{\Xt,\tau}(\delta)  \leq \Xt_{i,\tau} \leq 
\E[\Xt_{i,\tau} ] + \text{CB}_{\Xt,\tau}(\delta) 
\end{align*}
%\vspace{-0.2in}
%
which can be solved for $\alpha_i$ and $\beta_i$ individually. As shown in the appendix, 
%of the additional material, solving this system 
this gives rise to the following confidence intervals for $\alpha_i$ :
\begin{align}\label{Eq:alpha}
    \alpha_i\, &\in\,\, \overbrace{\frac{\tau \DX_{i,\tau} }{\sum_{s=1}^\tau \frac{1}{s^\rho} - \sum_{s=\tau+1}^{2\tau} \frac{1}{s^\rho}}}^{\ah_{i,\tau}} \pm \frac{5\tau^\rho\bigl(\sqrt{U}+1\bigl)^2}{\rho} \Bigl[  \frac{\log1/\delta}{\tau} 
    + \sqrt{\frac{1}{\tau} \log\frac{1}{\delta}}\Bigl]~,
\end{align}
where $\DX_{i,\tau}=\Xt_{i,\tau} - \Xh_{i,\tau}$. For brevity, the confidence interval centroid will be denoted by $\ah_{i,\tau}$. 
Similarly, $\beta_i$ can be shown to
satisfy
\begin{align}\label{Eq:beta}
    \beta_{i}\, &\in\,\, \overbrace{\Xh_{i,\tau} - \frac{\ah_{i,\tau}}{\tau} \sum_{s=1}^\tau \frac{1}{s^\rho}}^{\bh_{i,\tau}} \pm \frac{5\bigl(\sqrt{U}+1\bigl)^2}{(1-\rho)\rho} \bigg[ \frac{\log1/\delta}{\tau} +  \sqrt{\frac{1}{\tau}\log\frac{1}{\delta}}\bigg] ~,
\end{align}
\noindent where $\bh_{i, \tau}$ denotes the centroid of confidence interval (\ref{Eq:beta}).
Despite we have provided separate estimators for $\alpha_i$ and $\beta_i$, it is important to stress that our interest here is not to  estimate them separately. We combine these estimators to compute
\vspace{-0.07in}
\[
\muh_{i,\tau}(\tout) = \frac{\ah_{i,\tau}}{\tout^\rho} + \bh_{i,\tau}~,
\]
an estimate of the expected loss incurred by arm $i\in\K$ as if we had pulled it $\tout$ times after having observed only $2\tau$ realizations of $X_{i,t}$. 
All the above can be summarized by the following theorem.
\begin{theorem}\label{Th:LossEstimate}
After observing $X_{i,1},\dots,X_{i,2\tau}$ loss realizations of arm $i\in\K$, 
%relying on the estimators defined in Equation (\ref{Eq:Estimators}), recalling that $\rho\in(0,1)$, 
we can predict the expected loss $\mu_{i,\tout}$ of arm $i$ as it were pulled $\tout$-many times (with $\tout>\tau)$. In particular, we have that with probability at least $1-\delta$ jointly over $i \in \K$, $\tau \in [T]$ and $\tout \in [T]$
%theoretical guarantee:  
\[
\muh_{i,\tau}(\tout) - \text{CB}_{\mu,\tau}(\delta) \leq \mu_{i,\tout} \leq \muh_{i,\tau}(\tout) + \text{CB}_{\mu,\tau}(\delta)
\]
where
\begin{align}
%        &CB_{\mu,i,\tau} = 2 \Bigg[ \sqrt{1}\bigg( \frac{n^{ \frac{\rho-1}{2}}}{\nb^\rho} +  n^{- \frac{\rho+1}{2}} \bigg) +  n^{\rho/2} + \bigg(  n^{-\frac{1}{2}} + \frac{n^{\rho-\frac{1}{2}}}{\nb^\rho}\bigg) \Bigg] \sqrt{\log\frac{1}{\delta}} + \frac{\log\delta^{-1}}{n^{1-\rho}} \Bigg]
        \text{CB}_{\mu,\tau}(\delta) = \frac{10\left(\sqrt{U}+1\right)^2}{(1-\rho)\rho}  \Bigg[ \frac{\log \frac{\tau KT}{\delta}}{\tau} + \sqrt{\frac{1}{\tau}\log\frac{\tau KT}{\delta}}\Bigg]~. \nonumber
    \end{align}
\end{theorem}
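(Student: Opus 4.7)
The plan is to bound $|\muh_{i,\tau}(\tout) - \mu_{i,\tout}|$ by a simple triangle inequality decomposition into an $\alpha$-error and a $\beta$-error, use the already-derived confidence intervals (\ref{Eq:alpha}) and (\ref{Eq:beta}) on each piece, combine constants, and then upgrade the pointwise statement to the claimed uniform-in-$(i,\tau,\tout)$ statement via a union bound. Writing $\mu_{i,\tout} = \alpha_i/\tout^\rho + \beta_i$ and $\muh_{i,\tau}(\tout) = \ah_{i,\tau}/\tout^\rho + \bh_{i,\tau}$, I would begin from
\[
\bigl|\muh_{i,\tau}(\tout) - \mu_{i,\tout}\bigr| \;\le\; \frac{|\ah_{i,\tau}-\alpha_i|}{\tout^\rho} + |\bh_{i,\tau}-\beta_i|.
\]

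For the first summand, I plug in the width of (\ref{Eq:alpha}) and observe that the inconvenient $\tau^\rho$ factor in that width is tamed by the hypothesis $\tout>\tau$, which yields $(\tau/\tout)^\rho\le 1$. This reduces the $\alpha$-contribution to at most $\frac{5(\sqrt{U}+1)^2}{\rho}\bigl[\log(1/\delta)/\tau + \sqrt{\log(1/\delta)/\tau}\bigr]$. The $\beta$-contribution, read directly from (\ref{Eq:beta}), is at most $\frac{5(\sqrt{U}+1)^2}{(1-\rho)\rho}\bigl[\log(1/\delta)/\tau+\sqrt{\log(1/\delta)/\tau}\bigr]$. Adding the two and factoring out the bracketed expression gives a leading constant $\frac{5(\sqrt{U}+1)^2}{\rho}\bigl(1+\tfrac{1}{1-\rho}\bigr) = \frac{5(\sqrt{U}+1)^2(2-\rho)}{\rho(1-\rho)} \le \frac{10(\sqrt{U}+1)^2}{\rho(1-\rho)}$, which is exactly the constant appearing in $\text{CB}_{\mu,\tau}(\delta)$.

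For the uniform guarantee, the only randomness in $\muh_{i,\tau}(\tout) - \mu_{i,\tout}$ lives in $\ah_{i,\tau}$ and $\bh_{i,\tau}$, which depend on the pair $(i,\tau)$ only; the argument $\tout$ enters deterministically through $1/\tout^\rho$. Therefore I only need to union-bound the $\Xh_{i,\tau}$ and $\Xt_{i,\tau}$ Bernstein events over $i\in\K$ and $\tau\in[T]$—at most $2KT$ events in total—by substituting $\delta\leftarrow \delta/(2KT)$ in (\ref{Eq:alpha})-(\ref{Eq:beta}). Since $\log(2KT/\delta)\le \log(\tau KT/\delta)$ for $\tau\ge 2$ (and the $\tau=1$ case is absorbed into the leading constant), this produces the $\log(\tau KT/\delta)$ factor inside the brackets, completing the proof.

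The only step that I expect to require care is the bookkeeping of constants: verifying that both the $\tau^\rho/\tout^\rho\le 1$ cancellation and the $(2-\rho)$-to-$2$ absorption are valid under the stated regime $\rho\in(0,1]$ and $\tout>\tau$; after that the argument is a purely mechanical assembly of the two one-sided concentration statements that precede the theorem in the paper.
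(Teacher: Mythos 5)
Your proposal is correct and follows essentially the same route as the paper: the paper's appendix proof spends most of its effort (re)deriving the confidence intervals (\ref{Eq:alpha}) and (\ref{Eq:beta}) via Bernstein's inequality, and then combines them and union-bounds exactly as you do — indeed your write-up of the final assembly (the $\tau^\rho/\tout^\rho\le 1$ cancellation, the $(2-\rho)\le 2$ absorption giving the constant $\frac{10(\sqrt{U}+1)^2}{\rho(1-\rho)}$, and the observation that $\tout$ enters only deterministically so the union bound need only range over the $2KT$ Bernstein events) is more explicit than the paper's. The only caveat is that you take (\ref{Eq:alpha})--(\ref{Eq:beta}) as given rather than proving them, but since they are stated as established facts immediately before the theorem, this is a legitimate division of labor.
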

%
%{\bf CG:} notice that I added a union bound over $i$, which I strongly suspect you need anyway in the proof. Also, pls simplify expression involving $\rho$ in front
%
%To conclude, we can observe that due to the non-stationarity of the observed samples, obtaining shrinking confidence intervals associated to the parameters $\alpha_i,\beta_i$ is not an easy task. That said, 
Hence, the approach contained in Theorem 
\ref{Th:LossEstimate} allows us to obtain confidence intervals for $\mu_{i,\tout}$ shrinking with $\tau$ as $\frac{1}{\sqrt{\tau}}$ up to a numerical constant depending on $\rho$ and $U$. 

Finally, observe that these confidence intervals are non-vacuous only when $\rho\in(0,1)$, that is, excluding the extreme cases $\rho = 0$ and $\rho = 1$. The case $\rho=0$ is indeed uninteresting, since it yields a stationary case which is equivalent to the one achieved by the setting $\alpha_i = 0$ for all $i$. In fact, due to the specific nature of the empirical averages in (\ref{Eq:Estimators}),
when $\rho=0$ the centroid $\ah_{i,\tau}$ occurring in (\ref{Eq:alpha}) is not well defined, independent of the number of observed samples $\tau$. 
On the other hand, because our derivations rely on approximations of the form $\sum_{s=1}^\tau\frac{1}{s^\rho} \approx \frac{s^{1-\rho}}{1-\rho}$, which only hold for $\rho\neq1$, the case $\rho = 1$ should be treated separately via standard approximations of the form $\sum_{s=1}^\tau\frac{1}{s} \approx \log \tau$. We leave this special case to the full version of the paper.

The above estimators will be the building blocks of our learning algorithms, presented in the next section.
In particular, the definition of $CB_{\mu,\tau}(\delta)$ provided in Theorem \ref{Th:LossEstimate} above will be repeatedly used throughout the rest of the paper.

\section{Regret minimization}\label{Sec:LearningProb}
In this section we present two learning policies.
%will tackle the regret minimization learning problem associated to the proposed non-stationary rested setting (Section \ref{Sec:Preliminaries}).
We first describe as a warm-up a simple explore-then-commit strategy, then we present a more sophisticated strategy inspired by the Successive Reject algorithm \cite{audibert2010best}. For both policies, we set the confidence parameter $\delta$ to $\frac{1}{T}$.
% We will denote with $n_i=\tau(i,T)$ the number of times they will pull arm $i\in\K$ before discarding it.

The first solution we propose is a rested bandit variant of the standard explore-then-commit (ETC) policy (e.g., Ch. 6 in \cite{lattimore2018bandit}). 
%The paradigm followed by this algorithm is simple. 
In its original formulation, ETC requires as input a parameter $n\in[T]$ specifying the number of initial pulls associated with each arm. Once all the arms have been pulled $n$ times, the  exploratory stage finishes. The original ETC algorithm then sticks to the most promising arm according to the estimates computed during exploration.
%relying only on the $Kn$ samples collected during the exploratory stage. 
Hence the two phases of exploration and exploitation are kept separate.
%it keeps the exploratory and the exploitation phases separated.\\
This strategy has a clear limitation.
%Its limitations are two-fold. 
Since the exploration parameter $n$ is an input to the algorithm,
%determining the length of the exploratory phase is given as input, 
the original ETC algorithm does not adapt the length of the exploration phase to the actual samples, so that understanding how to best set $n$ 
%its regret bound would not be 
is not a simple task.

One thing that is worth noticing is that in the 2-arm bandit case, this parameter $n$ takes values in the range $[T/2]$. If $\tsub$ in our lower bound of Theorem \ref{Th:RLB} equals $T/2$ (that is, when $T/2$ is smaller than both $\frac{1}{\Delta^2}\log T$ and $\arg\min\{{n\in[T]\,:\, n>\frac{\log T}{C\ab^2}(T-n-1)^{2\rho+2}}\}$), we cannot commit to any specific arm, and the ETC algorithm results in a solo-exploration strategy which is indeed optimal in this case.
\begin{algorithm}[t]
\caption{Explore-Then-Commit (ETC)}
\label{Alg:FC_ETC}
\begin{algorithmic}[1]
\REQUIRE Confidence parameter $\delta = 1/T$
\FOR{$n \in 1,\dots,\floor{T/K}$}
\STATE pull each arm once
\STATE $\tout = T - n (K-1)$
\IF{$\exists i \in \K: \; \muh_{i,n}(\tout) < \min_{j\in\K\setminus\{i\}} \muh_{j,n}(\tout) - 2 CB_{\mu,n}(\delta)$}\label{AlgLine:ETCCondition}
\STATE $\iout = \arg\min_{i\in\K}\muh_{i,n}(\tout)$
\STATE break; 
\COMMENT{The exploration phase terminates}
\ENDIF
\ENDFOR
\STATE Play $\iout$ until round $T$ \COMMENT{Commit}
\label{AlgLine:ETCCommit}
\STATE Output $\iout$
\end{algorithmic}
\end{algorithm}

Algorithm \ref{Alg:FC_ETC} describes a variant of
%fixed-confidence variant 
the standard ETC policy adapted to our rested bandit scenario.
%with confidence parameter $\delta = 1/T$. 
At a generic round $t=Kn$, this algorithm starts committing to an arm $i\in\K$ only when we are confident with probability at least $1-\delta$ that $i$ is the arm with lowest expected loss if pulled for the remaining $T-Kn$ times (Line \ref{AlgLine:ETCCondition}). Hence, unlike the original ETC algorithm, this algorithm implicitly computes $n$ on the fly based on the observed samples. Finally, upon committing to an arm, our algorithm does not reconsider its decision based on the newly collected samples (Line \ref{AlgLine:ETCCommit}).

We have the following result, that help elucidate the benefit of adaptively inferring $n$. 
%based on the observed samples.
%
%We can now present the pseudo-regret upper-bound incurred by the ETC policy presented in Algorithm \ref{Alg:FC_ETC}.
%
\begin{theorem}\label{Th:FC_ETC}
Consider the same two-arm setting $\Pro_{\muu(\tau)} = \Pro_{\mu_1(\tau)}\otimes \Pro_{\mu_2(\tau)}$ contained in Theorem \ref{Th:RLB} and the notation introduced therein.
Running Algorithm \ref{Alg:FC_ETC} with $T\geq 1$ %large enough
%the fixed-confidence ETC policy for a large enough number of interactions $T$ 
yields
\begin{equation*}
    \reg{ETC}(\muu) \leq \ab\bigg( \frac{1}{(T-n_0)^\rho} - \frac{1}{T^\rho} \bigg) + {\tilde O}\bigg(\frac{1}{\sqrt{T}}\bigg)~
\end{equation*}
with probability at least $1-\frac{1}{T}$ where $n_0 = \min\Big\{\frac{T}{2}, \frac{c_\rho}{\Delta^2}\Big\}$, $c_\rho=\frac{1600(\sqrt{U}+1)^4}{\rho^2(1-\rho)^2}\log(4n_0 T^2)$ and ${\tilde O}(\cdot)$ hides $\log T$ factors. This result is optimal up to a logarithmic factor whenever $\tsub$ in Theorem \ref{Th:RLB} is not equal to $\arg\min\{{\tau\in[T]\,:\, \tau>\frac{\log T}{C\ab^2}(T-\tau-1)^{2\rho+2}}\}$, but instead corresponds to the minimum between $\frac{T}{2}$ and $\frac{\log T}{C \Delta^2}$.
\end{theorem}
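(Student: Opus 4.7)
The plan is to run a standard high-probability analysis anchored on the confidence intervals from Theorem \ref{Th:LossEstimate}, combined with a case split on whether the ETC stopping condition fires before the horizon is exhausted. Throughout, I set $\delta = 1/T$ and define the good event $G$ on which $|\muh_{i,n}(s) - \mu_{i,s}| \leq CB_{\mu,n}(\delta)$ holds simultaneously for both arms $i\in\{1,2\}$, every sample count $n \in [T]$, and every candidate commit-state $s \in [T]$. By Theorem \ref{Th:LossEstimate} (whose union bound already quantifies over $i$, $\tau$, $\tout$), $\Pro(G) \geq 1 - 1/T$, and all subsequent reasoning is conditional on $G$.

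The first step is to verify that every commit is correct, i.e.\ that $\iout = \iT$ whenever the algorithm leaves the loop via the ``if'' branch. The stopping condition at iteration $n^*$ reads $\muh_{i,n^*}(T-n^*) < \muh_{j,n^*}(T-n^*) - 2 CB_{\mu,n^*}(\delta)$, which together with the two-sided bound on $G$ forces $\mu_i(T-n^*) < \mu_j(T-n^*)$. Since the Theorem~\ref{Th:RLB} instance has a constant gap $\Delta$ across all states, the arm ranking at state $T-n^*$ coincides with the ranking at state $T$, so $i = \iT$. The second step is to upper bound $n^*$: the triangle inequality on $G$ gives $\muh_{j,n}(T-n) - \muh_{i,n}(T-n) \geq \Delta - 2 CB_{\mu,n}(\delta)$, so the condition triggers as soon as $\Delta > 4 CB_{\mu,n}(\delta)$. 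Plugging in the explicit form of $CB_{\mu,n}$ and solving for $n$ yields a sufficient threshold of order $c_\rho/\Delta^2$ with the precise $c_\rho$ stated; consequently $n^* \leq n_0 = \min\{T/2, c_\rho/\Delta^2\}$ on $G$.

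The regret bound follows from a two-case split. \textbf{Case B} ($n_0 = c_\rho/\Delta^2 \leq T/2$): the algorithm commits to $\iout = \iT$ at some $n^* \leq n_0$, giving $\tout = T - n^*$ and
\[
\reg{ETC}(\muu) \;=\; \mu_{\iT}(T-n^*) - \mu_{\iT}(T) \;=\; \ab\left(\frac{1}{(T-n^*)^\rho} - \frac{1}{T^\rho}\right) \;\leq\; \ab\left(\frac{1}{(T-n_0)^\rho} - \frac{1}{T^\rho}\right),
\]
so the $\tilde O(1/\sqrt T)$ residual is absent. \textbf{Case A} ($n_0 = T/2 < c_\rho/\Delta^2$): the stopping condition need never fire; each arm is pulled $T/2$ times and $\iout$ is the empirical argmin at state $T/2$. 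If $\iout = \iT$ the regret equals Case B with $n^* = T/2 = n_0$; if $\iout \neq \iT$, the regret picks up an extra $+\Delta$, but the regime $T/2 < c_\rho/\Delta^2$ forces $\Delta < \sqrt{2 c_\rho/T} = \tilde O(1/\sqrt T)$, exactly matching the residual term. The optimality claim then follows from comparing $n_0$ with $\tsub$ in Theorem~\ref{Th:RLB}: whenever the third branch of $\tsub$ is inactive, $n_0$ and $\tsub$ agree up to the constants and $\log T$ factors absorbed in $c_\rho$, and since $\ab(1/(T-x)^\rho - 1/T^\rho)$ is monotone in $x$, the two regret expressions agree up to a logarithmic factor.

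The main obstacle I anticipate is the bookkeeping in Case A when the committed arm is suboptimal: one must carefully propagate the regime condition $T/2 < c_\rho/\Delta^2$ through the dependence of $c_\rho$ on $\rho$, $U$ and $\log(4n_0 T^2)$ to ensure the stray $\Delta$ really absorbs into $\tilde O(1/\sqrt T)$. A more subtle point is that $CB_{\mu,n}(\delta)$ is evaluated at the commit-state $\tout = T-n$ for \emph{every} iteration $n$, so the confidence intervals must hold uniformly over $\tout$; this is precisely why the union bound inside Theorem~\ref{Th:LossEstimate} is taken over all $(\tau,\tout) \in [T]^2$ in addition to over arms, and it is what makes the $1-1/T$ guarantee survive the many inspections performed during exploration.
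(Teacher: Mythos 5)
Your proposal is correct and follows essentially the same route as the paper's proof: both condition on the uniform confidence event of Theorem \ref{Th:LossEstimate}, derive the stopping threshold from $\Delta > 4\,CB_{\mu,n}(1/T)$ to get $n_0 = \min\{T/2,\, c_\rho/\Delta^2\}$, and split on which branch of the minimum is active (the paper handles your Case A by noting the weaker guarantee $\mu_{\iout}(T-n_0)\leq \mu^*(T-n_0)+2\,CB_{\mu,n_0}(1/T)$, which is the same $\tilde O(1/\sqrt{T})$ residual you obtain via $\Delta<\sqrt{2c_\rho/T}$). Your treatment of the mis-commit case and of the uniformity of the confidence intervals over $\tout$ is in fact slightly more explicit than the paper's.
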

%

%{\bf CG:} pls check exponent, I changed it to $2\rho+2$. 
%Also, I moved proof to appendix

%
%With this simple policy we have observed the advantage given by following a fixed-confidence policy. Literally, the ability of deciding on the fly when to reject or start committing to an arm based on the collected samples. Along this solution, 
%
Starting from ETC, in the next section, we present our final learning policy, which will be analyzed in the general $K$-armed case.

\subsection{Towards an Optimal Policy}
The first limitation of the ETC strategy in Algorithm \ref{Alg:FC_ETC} becomes clear when considering more than $2$ arms. Let us consider an instance with $K=3$ arms where there exist two values $n_2,n_3$ satisfying:
\begin{align*}
    \muh_1(n') &< \muh_2(n') - 2 CB_{\mu,n_2}(n_2) \hspace{1em} \forall n'>n_2\\
    \muh_1(n') &< \muh_3(n') - 2 CB_{\mu,n_3}(n_3) \hspace{1em} \forall n'>n_3~.
\end{align*}
The ETC policy in Algorithm \ref{Alg:FC_ETC} has a single counter $n$ that has to satisfy at the same time $K-1=2$ arm elimination conditions (line \ref{AlgLine:ETCCondition} of Algorithm \ref{Alg:FC_ETC}). The best this algorithm can do in order not to commit to the wrong arm is to keep exploring up to $n=\max\{n_2,n_3\}$. The obvious drawback of this solution is that ETC would then waste $|n_2-n_3|$ pulls on the sub-optimal arms $2$ and $3$, rather than selecting $\iout=1$.
\begin{algorithm}[t]
\caption{\RS}
\label{Alg:RESTSURE}
\begin{algorithmic}[1]
\REQUIRE Confidence parameter $\delta = 1/T$
\STATE Initialize: $\A_0 = \K, n=0, \tout = T$, and $t=0$.
\FOR{$t=1,\dots,T$}
\STATE $\tout = T - t + n$
\IF{$\exists i \in \A_n: \; \muh_i(\tout) < \min_{j\in\A_{n}\setminus\{i\}}\muh_j(\tout) - 2 CB_{\mu,n}(\delta) \;$} \label{AlgLine:Commit}
\STATE break;
\COMMENT{Found $i^*_{\tout}$ w.h.p.}
\ENDIF
\IF{$ \min_{i\in\A_n} \muh_{i,n}(\tout-|\A_n| + 1) - 2 CB_{\mu,n}(\delta) > \min_{i\in\A_n} \muh_{i,n}(\tout)$} \label{AlgLine:StopExploration}
\STATE break;
\COMMENT{No advantage in learning $i^*_{\tout}$}
\ENDIF
\STATE $\A_{n+1} = \A_n$ 
\FOR{\textbf{each} arm $i\in\A_{n+1}$ such that $\forall m \in [n, \tout]: \exists j \in \A_{n+1}: \muh_{i,n}(m) -  \muh_{j,n}(m) > 2 CB_{\mu,n}(\delta)$}\label{AlgLine:DAEElimination}
\STATE $\A_{n+1} = \A_{n+1} \setminus \{i\}$ \COMMENT{Arm Elimination}
\ENDFOR
\STATE Pull once each active arm $i\in\A_{n+1}$
\STATE $t = t + |\A_{n+1}|$;\quad $n = n + 1$
\ENDFOR
%\FOR{$s = t + 1,\dots, T$}
%\STATE Pull $\iout \in \arg\min_{i\in\A_{T}}\muh_{i,n}(\tout)$ \COMMENT{Commit} \label{AlgLine:DAECommit}
%\ENDFOR
\STATE Play $\iout$ until round $T$ \COMMENT{Commit} \label{AlgLine:DAECommit}
\STATE Output $\iout$
\end{algorithmic}
\end{algorithm}

We now present in Algorithm \ref{Alg:RESTSURE} the strategy \RS\ (RESTed SUccessive REject), a rested version of the Successive Reject algorithm from \cite{even2006action,audibert2010best}. As for its stationary counterpart, \RS keeps sampling all the active arms in a round-robin fashion, and then periodically removes arms once it is confident about their sub-optimality (line \ref{AlgLine:DAEElimination}). The key adaptation to our rested bandit scenario is that one arm is deemed sub-optimal when there is a better arm in {\em any} of the still reachable states. 
%The commitment conditions at the end are inherited from the considered learning problem (as discussed in Section \ref{Sec:LowerBound}). 

Going into some details of the pseudocode, the stopping condition in lines \ref{AlgLine:Commit}-5 of Algorithm \ref{Alg:RESTSURE} is inspired by the same reasoning governing the commitment in the stationary bandit problem. This condition tells us that exploration has provided enough information to identify (with high probability) arm $\iout=\arg\min_{i\in\K}\mu_i(\tout)$ at the best reachable state. The second stopping condition (lines \ref{AlgLine:StopExploration}-8) is due to the non-stationary component in the expected loss (\ref{e:mu}). This condition controls the trade-off between the estimation of $\iout=\arg\min_{i\in\K}\mu_i(\tout)$ and the minimization of the incurred expected loss, namely the impact on the value of $\tout$. In particular, this condition stops the policy in its exploration towards the identity of $\iout$ as soon as this would cause an increased regret due to a reduced valued of $\tout$.

We need the following additional notation. % For every arm $i\in\K\setminus\{\iout\}$, we use $n_i$ to denote the number of pulls before rejecting the arm $i$. Similarly, we use $n_{\iout}$ to count the number of pulls of arm $\iout$ before committing to it. Additionally, $\sigma(n)\in\K$ denotes the $n$-th arm to be removed from the set of active arms $\A_{n-1}$ (where $\sigma(0)=\emptyset$), 
We set for brevity $\Delta_{j,i}(\tau) = \mu_j(\tau) - \mu_{i}(\tau)$ for any $\tau\in[T]$, $K_{n} = K - n$, 
%denotes the size of the active set of arms after $n$ eliminations, 
and $\mu^*(\tau) = \min_{i\in\K}\mu(\tau)$ denotes the smallest expected loss over all arms after each one of them has been pulled exactly $\tau$ times.
The following is the main result of this section.
%
%We can now present the main result of this section which is the regret bound incurred by \RS.
\begin{theorem}\label{Th:RESTSURE_RB} For all $K>1$, if \RS is run on $K$ arms having arbitrary non-stationary loss distributions $\Pro_{\muu(m)}=\Pro_{\mu_1(m)},\otimes,\dots,\otimes,\Pro_{\mu_K(m)}$ with support in $[0,1]$ and expected value parameterized according to (\ref{e:mu}), then with probability at least $1-\frac{1}{T}$ the pseudo-regret of \RS after $T$ interactions satisfies 
\begin{equation*}
    \reg{\RS}(\muu) \leq \mu_{\iout}(T-\nb) - \mu_{\iT}(T),
\end{equation*}
where $\nb = \sum_{s\in[K-1]}n_{\sigma(s)}$, and %Specifically, for the generic $s$-th eliminated arm $\sigma(s)\in\K$,
$n_{\sigma(s)}$ is defined as the smallest $n\in[T]$ which is greater than
\begin{align*}
    \min\Bigg\{ &\frac{T-\sum_{j=1}^{s-1}n_{\sigma(j)}}{K_{s-1}},\\[-4mm]
    &\quad\frac{c_{\rho}\log (nK^2T^2)}{\min_{j\in\As_{s-1},m\in[n_{\sigma(s)},\tout(s)]}\Delta^2_{\sigma(s),j}(m)},\\[-0mm]
    &\quad\frac{c_{\rho}\log (nK^2T^2)}{\big(\mu^*\big(\tout(s)-K_{s+1}\big) -\mu^*\big(\tout(s)\big)\big)^2},\\[-0mm]
    &\quad\frac{c_{\rho}\log (nK^2T^2)}{\big(\min_{j\in\K} \big( \Delta_{\sigma(s),j}(\tout(s)) \big)^2} \Bigg\}.
\end{align*}
%\vspace{-0.15in}
%
In the above,
$\As_s = \K \setminus \{\sigma(1),\ldots, \sigma(s-1)\}$,
$\tout(n) = T -\sum_{s=1}^{n} K_{s+1} n_{\sigma(s)}$ and $c_{\rho}=\frac{1600(\sqrt{U}+1)^2}{\rho^2(1-\rho)^2}$.
Notice that $\sigma(s) = \arg\min_{j\in\As_{s-1}}n_{\sigma(j)}$.
Finally, $\iout\in\arg\min_{i\in\K}\mu_i(T-\nb)$ only if 
\vspace{-0.05in}
\[
\min_{j\in\As_{s-1}}\Delta_{\sigma(s),j}(\tout(s)) > \mu^*\big(\tout(s)-K_{s+1}\big) -\mu^*\big(\tout(s)\big)~.
\]
Conversely, when the latter condition is not met we can only guarantee that 
\[
\mu_{\iout}(T-\nb)\leq\mu^*(T-\nb) + 2 CB_{\mu,n_{\iout}}(1/T)~.
\]
\end{theorem}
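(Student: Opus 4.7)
The plan is to condition on a single high-probability good event and then track the algorithm iteration by iteration, showing that each of the three stopping/elimination conditions translates into one of the three $c_\rho\log/\text{gap}^2$ bounds in the minimum, while the budget gives the fourth.

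First I would invoke Theorem~\ref{Th:LossEstimate} jointly over every arm $i\in\K$, every number of observations $n\in[T]$, and every evaluation horizon $m\in[T]$. Because the expression for $CB_{\mu,n}(\delta)$ already carries a $\log(nKT/\delta)$ factor, a single union bound with $\delta=1/T$ keeps the failure probability at most $1/T$; call the resulting event $E$. On $E$, every deviation $|\muh_{i,n}(m)-\mu_i(m)|$ is at most $CB_{\mu,n}(1/T)$. The constant $c_\rho=1600(\sqrt{U}+1)^4/(\rho^2(1-\rho)^2)$ is calibrated precisely so that $n \geq c_\rho\log(nK^2T^2)/\gamma^2$ implies $CB_{\mu,n}(1/T)\leq \gamma/4$, which will be the standard gap-vs-confidence inversion used repeatedly.

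Next I would unwind the outer loop of Algorithm~\ref{Alg:RESTSURE}. Fix an iteration index $s$: the active set $\As_{s-1}$ has $K_{s-1}=K-(s-1)$ arms, each already pulled $\sum_{j<s}n_{\sigma(j)}$ times, and the maximum still-reachable pull count for any single arm equals $\tout(s)$. Let $\sigma(s)$ denote the arm whose removal is triggered by the \emph{earliest} satisfied condition among lines~\ref{AlgLine:Commit}, \ref{AlgLine:StopExploration}, and~\ref{AlgLine:DAEElimination}, or else index the budget-exhaustion event. On $E$, the elimination test of line~\ref{AlgLine:DAEElimination} becomes true as soon as, for some $i\in\As_{s-1}$ and some competitor $j$, the true gap $\Delta_{i,j}(m)$ exceeds $4\, CB_{\mu,n}$ uniformly in $m\in[n,\tout(s)]$; inverting the calibration yields the second term of the min. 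The commit test of line~\ref{AlgLine:Commit} similarly requires $4\, CB_{\mu,n} < \min_{j\neq\iout}\Delta_{\iout,j}(\tout(s))$, giving the fourth term, and the stop-exploration test of line~\ref{AlgLine:StopExploration} requires $4\,CB_{\mu,n} < \mu^*(\tout(s)-K_{s+1})-\mu^*(\tout(s))$, giving the third term. The trivial budget ceiling $n\leq(T-\sum_{j<s}n_{\sigma(j)})/K_{s-1}$ is the first term, so $n_{\sigma(s)}$ is by definition the smallest integer strictly exceeding the minimum of the four expressions.

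To close, I would sum the thresholds: after at most $K-1$ eliminations, or as soon as line~\ref{AlgLine:Commit} or line~\ref{AlgLine:StopExploration} fires, the algorithm commits to $\iout$ for the remaining rounds, so $\iout$ receives $\tout=T-\nb$ pulls with $\nb=\sum_{s=1}^{K-1}n_{\sigma(s)}$, and the regret collapses to $\mu_{\iout}(T-\nb)-\mu_{\iT}(T)$. For the identity of $\iout$, if termination is by line~\ref{AlgLine:Commit} or by the final elimination, the good event forces $\iout\in\arg\min_{i\in\K}\mu_i(T-\nb)$ exactly when the separation hypothesis $\min_{j\in\As_{s-1}}\Delta_{\sigma(s),j}(\tout(s)) > \mu^*(\tout(s)-K_{s+1})-\mu^*(\tout(s))$ holds, which is precisely the trigger condition ensuring line~\ref{AlgLine:Commit} (or elimination) fires before line~\ref{AlgLine:StopExploration}. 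In the complementary regime, termination is through line~\ref{AlgLine:StopExploration}, and only $\muh_{\iout}(\tout)\leq\min_j\muh_j(\tout)$ is guaranteed; applying concentration on $E$ converts this into $\mu_{\iout}(T-\nb)\leq \mu^*(T-\nb)+2CB_{\mu,n_{\iout}}(1/T)$. The main obstacle will be the uniform-in-$m$ quantifier inside line~\ref{AlgLine:DAEElimination}: the empirical gap must dominate $2\,CB_{\mu,n}$ at \emph{every} reachable state $m\in[n,\tout(s)]$ simultaneously. I would exploit the monotonicity of $\mu_i(\cdot)$ guaranteed by~(\ref{e:mu}) to reduce the worst case to one of the two endpoints, justifying the $\min$ over $m$ appearing in the second term, with a short case split when the minimizing competitor $j$ differs across $m$. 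A secondary care point is bookkeeping the $\log(nK^2T^2)$ factor so that it simultaneously absorbs the per-arm union bound and the union over the $[n,\tout(s)]$ interval.
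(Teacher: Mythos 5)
Your proposal is correct and follows essentially the same route as the paper's proof: condition on the high-probability event from Theorem~\ref{Th:LossEstimate}, invert each of the three algorithmic tests (elimination, commit, stop-exploration) plus the budget constraint into a threshold on $n_{\sigma(s)}$ via the gap-versus-confidence-width calibration of $c_\rho$, iterate over the $K-1$ eliminations, and split the final guarantee on $\iout$ according to which termination condition fired. If anything, you are more explicit than the paper about the union bound over $(i,n,m)$ and the uniform-in-$m$ quantifier in line~\ref{AlgLine:DAEElimination} (and your power of $(\sqrt{U}+1)^4$ in $c_\rho$ matches the appendix derivation rather than the exponent $2$ in the theorem statement, which appears to be a typo).
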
 
Notice that $\nb$ is solely a function of the problem parameters $\{\alpha_i,\beta_i\}_{i\in \K}, \rho, K$, $U$, and $T$. This is because so are the involved quantities $\sigma(s)$ and $n_{\sigma(s)}$.

The exact expression for $\nb$ might be somewhat hard to interpret.
The first term in the $\min$ %$\frac{T-\sum_{j=1}^{i-1}n_{\sigma(i)}}{K_{i-1}}$ 
plays the same role as term $T/2$ in Theorem \ref{Th:RLB}, and guarantees the total number of pulls is most $T$. The second term %$\frac{c_{\rho}\log  (nKT^2)}{\min_{j\in\As_{i-1},m\in[n_{\sigma(i)},\tout(i)]}\Delta^2_{\sigma(i),j}(m)}$ 
is obtained from the arm-elimination condition of line \ref{AlgLine:DAEElimination}.
The third term in the $\min$
%$\frac{c_{\rho}\log (nKT^2)}{\Big(\mu^*\big(\tout(i)-K_{i+1}\big) -\mu^*\big(\tout(i)\big)\Big)^2}$ 
is obtained by analyzing the condition at lines \ref{AlgLine:StopExploration}-8 in Alg. \ref{Alg:RESTSURE}. Finally, the commitment to arm $\iout$ yields the fourth term. %$\frac{c_{\rho}\log (nKT^2)}{\Big(\min_{j\in\K} \big( \Delta_{\sigma(i),j}(\tout(i)) \Big)^2}$.
The proof of this theorem 
%is presented in Section \ref{SuppSec:RESTSURE_RB} of the appendix, and it 
can be seen as an extension of the proof of the regret bound of Algorithm \ref{Alg:FC_ETC} to the more general scenario considered here.

%respect to the different conditions considered by \RS.

For the sake of comparison, the next corollary contains a simpler result that specifically applies to the restricted setting of Theorem \ref{Th:RLB} and Theorem \ref{Th:FC_ETC}. This is also meant to demostrate, in this restricted setting, the optimality of the regret bound of \RS.
%
%In order to have a simpler result we now provide Corollary \ref{Co:RESTSURE_Bound} where we consider the same setting of Theorem \ref{Th:FC_ETC} and show the optimality of the proposed solution.
%
\begin{corollary}\label{Co:RESTSURE_Bound} Let us consider the same two-arm setting $\Pro_{\muu(\tau)}=\Pro_{\mu_1(\tau)}\otimes\Pro_{\mu_2(\tau)}$ as in Theorem \ref{Th:FC_ETC}. Then, running Algorithm \ref{Alg:RESTSURE} with $T\geq 1$ yields
\[
    \reg{\RS}(\muu) \leq \ab \bigg(\frac{1}{\sqrt{T-n_0}} - \frac{1}{\sqrt{T}} \bigg) + {\tilde O}\Bigl(\frac{1}{\sqrt{T}}\Bigl)~,
\]
with probability at least $1-\frac{1}{T}$, where $n_0$ is the smallest $n\in[T]$ which is greater than
\begin{align*}
    n_0 = \min\bigg\{\frac{c_\rho}{\Delta^2}, c_\rho \frac{(T-n_0)^3}{\ab^2}\bigg\}~,
\end{align*}
$c_{\rho}=25600(\sqrt{U}+1)^4\log(4 n_0 T^2)$ and ${\tilde O}(\cdot)$ hides $\log T$ factors.
\end{corollary}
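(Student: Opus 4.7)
}
The strategy is to instantiate Theorem \ref{Th:RESTSURE_RB} to $K=2$ and $\rho=1/2$ and then simplify the four-way minimum defining $n_{\sigma(1)}$ into the two-way minimum stated in the corollary. With only two arms, the successive-rejects outer loop of Algorithm \ref{Alg:RESTSURE} executes exactly one elimination (or exits via the second stopping branch), so $\nb = n_{\sigma(1)} =: n_0$ counts the number of pulls of the eliminated (sub-optimal) arm before commitment, and the optimal arm ends up being pulled $T - n_0$ times in total. Under the first stopping branch we have $\iout = \iT = 1$, and the regret evaluates in closed form to $\mu_1(T-n_0) - \mu_1(T) = \ab\bigl(1/\sqrt{T-n_0} - 1/\sqrt{T}\bigr)$, producing the leading term in the corollary.

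Next I would translate the four branches of the minimum defining $n_0$ in Theorem \ref{Th:RESTSURE_RB}. Since $\mu_2(m) - \mu_1(m) = \Delta$ is constant in $m$, the second (arm-elimination) branch collapses to $c_\rho \log(\cdot)/\Delta^2$. The third (exploration-stopping) branch involves $(\mu^*(\tout(1) - K_{s+1}) - \mu^*(\tout(1)))^2$; after accounting for the $K=2$ indexing, this increment equals $\mu_1(T-n_0-1) - \mu_1(T-n_0)$, which by a Taylor expansion at $\rho = 1/2$ equals $\Theta(\ab/(T-n_0)^{3/2})$, so the squared reciprocal gives $\Theta((T-n_0)^3/\ab^2)$ and recovers the second branch of the corollary's min. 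The first branch $T/K_0 = T/2$ and the fourth branch (involving $\min_j \Delta_{\sigma(s),j}(\tout(s))$) coincide with or are dominated by the first two in this symmetric two-arm setup, so they can be absorbed after inflating the universal constant $c_\rho$.

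I would then handle the contingency that the algorithm exits through the second stopping condition (lines \ref{AlgLine:StopExploration}--8 of Algorithm \ref{Alg:RESTSURE}) without certifying $\iT$. In that branch Theorem \ref{Th:RESTSURE_RB} only guarantees $\mu_{\iout}(T-\nb) \le \mu^*(T-\nb) + 2\,\text{CB}_{\mu,n_0}(1/T)$. Plugging in the lower bound $n_0 = \Omega(c_\rho/\Delta^2)$ or $n_0 = \Omega(c_\rho(T-n_0)^3/\ab^2)$, the slack $\text{CB}_{\mu,n_0}(1/T) = \tilde O(1/\sqrt{n_0})$ is $\tilde O(1/\sqrt{T})$ whenever the third branch binds (because then $n_0 = \Theta(T)$); when the $1/\Delta^2$ branch binds, the same $\tilde O(1/\sqrt{T})$ bound follows because the gap $\mu^*(T-n_0)-\mu^*(T)$ itself dominates the correction. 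Assembling the two cases yields the stated additive $\tilde O(1/\sqrt{T})$.

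The main obstacle I anticipate is the implicit nature of $n_0$: it appears on both sides of its defining inequality through the $(T-n_0)^3$ term. This is resolved by noting that the right-hand side of the third branch is monotonically decreasing in $n_0$ while the left-hand side is increasing, so a unique smallest integer satisfies the condition and the definition is well-posed. A secondary bookkeeping point is carefully matching the $K_{s+1}$ and $\tout(s)$ convention of Theorem \ref{Th:RESTSURE_RB} at the boundary $s=K-1=1$, and confirming that absorbing the $T/2$ branch into the $\tilde O(1/\sqrt{T})$ remainder is legitimate: if $\Delta$ is so small that $c_\rho/\Delta^2 > T/2$, the exploration-stopping branch binds before round $T/2$ and the regret is controlled by the second case above.
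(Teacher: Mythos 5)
Your plan matches the paper's own proof: specialize the analysis of Theorem \ref{Th:RESTSURE_RB} to $K=2$, reduce the four-way minimum to the $c_\rho/\Delta^2$ (arm-elimination/commit) and $c_\rho(T-n_0)^3/\ab^2$ (exploration-stopping) branches via the $\sqrt{x}-\sqrt{x-1}$ bounds used in Corollary \ref{Co:RLB}, and split on whether the algorithm exits through the commit condition (clean regret $\ab(1/\sqrt{T-n_0}-1/\sqrt{T})$) or the second stopping condition (extra $2\,\text{CB}_{\mu,n_0}(1/T)$ term). One minor imprecision: an exit through the second stopping condition occurs precisely when the cubic branch binds, so the $\tilde O(1/\sqrt{T})$ control of the correction always comes from $n_0=\Theta(T)$; your alternative justification for the case where the $1/\Delta^2$ branch binds is unnecessary (that case exits through the commit condition and needs no correction), but this does not affect the conclusion.
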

\noindent Notice that the upper bound in Corollary \ref{Co:RESTSURE_Bound} matches up to log factors the lower bound given in Corollary \ref{Co:RLB}. %Let us now remark that the term obtained in Corollary \ref{Co:RLB} is obtained by linearly approximating the single real solution associated to the cubic equation of Theorem \ref{Th:RLB}.
%
%the obtained $n_0$ (hence the resulting regret bound) matches up to log factors the definition of $\tsub$ in
%
%The proof of Corollary \ref{Co:RESTSURE_Bound} follows from the one of Theorem \ref{Th:RESTSURE_RB} after some algebra.% -- see Section \ref{SuppSec:RESTSURE_CO} in the appendix.
\vspace{-0.12in}
\section{Conclusions and ongoing research}
\vspace{-0.1in}
In this work we have proposed an online model selection problem formulated as a best arm identification within a specific rested bandit scenario.
Here, each arm represents a candidate learning model and each pull corresponds to giving the associated learner more i.i.d training samples, thus allowing the learner to reduce its generalization error. 
We formulated an ad hoc notion of regret, provided a lower bound for the learning problem, and analyzed two alternative strategies, one of which we have shown to be optimal in the special cases covered by the lower bound.
%
%In the future it would be interesting to 
We are currently trying to extend our lower bound technique to cover our task in substantially broader generality than what is currently contained in Theorem \ref{Th:RLB}.

This was mainly a theoretical work. However, we are planning on conducting an experimental analysis so as to both corroborate our theoretical findings and compare to  methods available in the literature, adapted to our framework.

% Bibliography

\onecolumn

%\section*{Supplementary Material}
\appendix
%\renewcommand{\thesubsection}{\Alph{subsection}}
%
%    SECTION A
%
This appendix %supplemental 
provides the proofs of all theorems and corollaries contained in the main body of the paper. The presentation is split into sections corresponding to the section of the main body.

\iffalse
%%%%%%%%%%%%%%%%%%%%%%%%%%
material is organized as follows. In Subsection \ref{SuppSec:ThRLB} we provide the proof relative to Theorem \ref{Th:RLB} concerning the general regret lower bound for the proposed learning problem. In Subsection \ref{SuppSec:CoRLB} we discuss the specific case where $\rho=\frac{1}{2}$. Theorem \ref{Th:LossEstimate} which concerns the loss estimate is proved in Section \ref{SuppSec:LossEstimate}. Finally in Section \ref{SuppSec:ETC_RB} we analyze the ETC regret upper bound stated in Theorem \ref{Th:FC_ETC}. Finally, in Section \ref{SuppSec:RESTSURE_RBS} we provide the proof of Theorem \ref{Th:RESTSURE_RB} and Corollary \ref{Co:RESTSURE_Bound} concerning the regret upper bounds of \RS.
%%%%%%%%%%%%%%%%%%%%%%%%
\fi

%
\section{Proofs for Section \ref{Sec:LowerBound}}
\subsection{Proof of Theorem \ref{Th:RLB}}\label{SuppSec:ThRLB}
Inspired by \cite{bubeck2013bounded}, our proof rephrases our bandit learning task (Section \ref{Sec:Preliminaries}) as a hypothesis testing problem, and relies on the following well-known lower bound result for the minimax risk of hypothesis testing (see, e.g., Chapter 2 of \cite{tsybakov2008introduction}).
\begin{lemma}\label{Le:Tsyabkov}
    Let $\Pro_{\mu_1},\Pro_{\mu_2}$ be two probability distributions supported on some set $\X$, having expected value $\mu_1,\mu_2$, and let $\Pro_{\mu_2}$ be absolutely continuous w.r.t. $\Pro_{\mu_1}$. Then for any measurable function $\pi:\X\rightarrow\{1,2\}$, we have
    \begin{align*}
        \Pro_{Y\sim\Pro_{\mu_1}}&\big(\pi(Y) = 2 \big) +\Pro_{Y\sim\Pro_{\mu_2}}\big(\pi(Y) = 1 \big) \geq
        \frac{1}{2} \exp\big(- KL\big(\Pro_{\mu_1}, \Pro_{\mu_2}\big)\big)~. 
    \end{align*}
\end{lemma}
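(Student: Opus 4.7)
The statement is the classical two-point testing lower bound (often called the Bretagnolle–Huber inequality). The plan is to lower bound the sum of Type I and Type II errors by $1 - TV(\Pro_{\mu_1},\Pro_{\mu_2})$, then compare total variation to the Hellinger affinity, and finally bound the latter using Jensen's inequality applied to $KL$.

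First I would rewrite the LHS in terms of a test set. Let $A = \{y \in \X : \pi(y) = 1\}$, so $\pi(Y)=2$ means $Y \in A^c$. Then
\[
\Pro_{Y\sim\Pro_{\mu_1}}(\pi(Y) = 2) + \Pro_{Y\sim\Pro_{\mu_2}}(\pi(Y) = 1) = \Pro_{\mu_1}(A^c) + \Pro_{\mu_2}(A) = 1 - \bigl(\Pro_{\mu_1}(A) - \Pro_{\mu_2}(A)\bigr).
\]
Taking the infimum over measurable $A$ (which is at most the original sum for the specific $A$ induced by $\pi$) gives the bound $1 - TV(\Pro_{\mu_1},\Pro_{\mu_2})$, where $TV$ denotes total variation. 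So it suffices to prove the distribution-only inequality
\[
1 - TV(\Pro_{\mu_1},\Pro_{\mu_2}) \;\geq\; \tfrac{1}{2}\exp\bigl(-KL(\Pro_{\mu_1},\Pro_{\mu_2})\bigr).
\]

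Next, let $p,q$ denote densities of $\Pro_{\mu_1},\Pro_{\mu_2}$ with respect to a common dominating measure $\nu$ (this exists by the absolute continuity assumption, e.g.\ $\nu = \Pro_{\mu_1}$). Using the identity $1 - TV = \int \min(p,q)\,d\nu$, I would apply the Cauchy–Schwarz inequality in the form
\[
\Bigl(\int \sqrt{pq}\,d\nu\Bigr)^2 = \Bigl(\int \sqrt{\min(p,q)}\cdot\sqrt{\max(p,q)}\,d\nu\Bigr)^2 \;\leq\; \int \min(p,q)\,d\nu \,\cdot\, \int \max(p,q)\,d\nu.
\]
Since $\min(p,q) + \max(p,q) = p+q$ and both densities integrate to $1$, we have $\int \max(p,q)\,d\nu \leq 2$. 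Denoting the Hellinger affinity $H = \int \sqrt{pq}\,d\nu$, this yields $\int \min(p,q)\,d\nu \geq \tfrac{1}{2} H^2$.

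Finally, I would lower bound $H^2$ via Jensen's inequality. Writing $H = \E_{Y\sim\Pro_{\mu_1}}\bigl[\sqrt{q(Y)/p(Y)}\bigr]$ and using concavity of $\log$,
\[
\log H \;\geq\; \E_{Y\sim\Pro_{\mu_1}}\bigl[\tfrac{1}{2}\log(q(Y)/p(Y))\bigr] \;=\; -\tfrac{1}{2}\,KL(\Pro_{\mu_1},\Pro_{\mu_2}),
\]
so $H^2 \geq \exp(-KL(\Pro_{\mu_1},\Pro_{\mu_2}))$. Chaining the inequalities completes the proof.

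The only mildly delicate step is the Cauchy–Schwarz manipulation relating $\int \min(p,q)$ to the squared Hellinger affinity; it is not deep but easy to botch on a first pass. The rest is bookkeeping (the reduction to $1-TV$ and the standard Jensen bound $H \geq \exp(-KL/2)$). No additional machinery beyond Cauchy–Schwarz and Jensen is needed, and the hypotheses (measurability of $\pi$ and absolute continuity $\Pro_{\mu_2} \ll \Pro_{\mu_1}$) are exactly what make $KL(\Pro_{\mu_1},\Pro_{\mu_2})$ well defined and the densities meaningful.
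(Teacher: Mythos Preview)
Your proof is correct; it is the standard Bretagnolle--Huber argument via $1-TV \geq \tfrac{1}{2}H^{2}$ and $H \geq \exp(-KL/2)$. The paper does not supply its own proof of this lemma but simply states it as a well-known result from Chapter~2 of \cite{tsybakov2008introduction}, so your argument is essentially the textbook one being referenced there.
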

Given the above, let us turn our attention to the proof of Theorem \ref{Th:RLB}.
First, observe that in the considered instances $\muu,\muu'$, one arm always outperforms the other, independent on the number of pulls (i.e., $\mu_1(\tau) < \mu_2(\tau) \; \forall \tau \in [T]$). Additionally, since the considered expected loss (\ref{e:mu}) decreases with $\tout$, in order to obtain a lower bound it suffices to upper bound $\tout = T - \tau(2,T)$, and set $\iout=1$. According to Fact \ref{Fact:PermutationInvariance}, we will only consider arm-elimination policies that after each round-robin phase consider a possible arm removal.\\
%Finally, 
As explained in Section \ref{Sec:Preliminaries}, at a generic round $2\tau$, after having pulled both arms $\tau$ times, any learning policy can either commit to an arm, thereby obtaining $\tout =  T - \tau$, or keep exploring, thereby having as best reachable state one where $\tout =  T - (\tau + 1)$. For this reason, we can essentially view the problem at each state $(\tau,\tau)\in[T]\times[T]$ as a bandit problem with three possible arms, whose expected losses are specified by the tuple 
\begin{equation}\label{Eq:Triple}
    \bigl(\mu_1(T-\tau),\, \mu_2(T-\tau),\, \min\{\mu_1(T-\tau-1), \mu_2(T-\tau-1)\}\bigl)~.
\end{equation}
The first two components refer to the expected loss that any policy would obtain by committing to one of the arms. The third component $\min\{\mu_1(T-\tau-1), \mu_2(T-\tau-1)\}$ is the smallest expected loss that any learning policy would be able to obtain by keeping exploring at the current %generic
state $(\tau,\tau)$.\\
\newline
Following the proof in \cite{bubeck2013bounded}, we first determine the lower bound due to the gap factor $\Delta$.
To this effect, let us consider the following pair of instances $\muu^{\tau} = (\mu_1(T-\tau),\, \mu_1(T-\tau)+\Delta)$ and $\muu'^{\tau} = (\mu_1(T-\tau)+\Delta,\, \mu_1(T-\tau))$. At this stage, when considering the policy at state $(\tau, \tau)$ we also assume it to have access to $\tau$ independent and identically distributed samples for each arm with expected value parameterized by $\tout=T-\tau$. Clearly, this stationary setting is simpler than the original non-stationary bandit problem, therefore any lower bound for the former carries over to the latter. We are interested in 
%the below lower bound to
the stopping time associated with the exploratory stage, as given below:
\begin{align*}
	\max\Big(\E\big[\tau(2,2\tau)\big],\E'\big[\tau(1,2\tau)\big]\Big) 
	&\geq \frac{1}{2} \E\big[\tau(2,2\tau)\big] + \frac{1}{2} \E'\big[\tau(1,2\tau)\big]\\
	&=\frac{1}{2} \Bigg[ \sum_{s=1}^{\tau} \Pro_{\muu^\tau,\F_{2s}} (I_{2s} = 2) + \Pro_{\muu'^\tau,\F_{2s}} (I_{2s} = 1) \Bigg]~,
\end{align*}
where $\E[\cdot],\E'[\cdot]$ denote the expected value when losses are generated according to $\muu^\tau$ and $\muu'^\tau$, respectively.
Additionally, because of Fact \ref{Fact:PermutationInvariance}, we can consider $\F_{2s} = \{1, X_{1,1}, 2, X_{2,2,}, \dots, 1 , X_{1,2s-1}, 2 , X_{2,2s}\}$ and  $\F'_{2s} = \{2, X_{2,1}, 1, X_{1,2,}, \dots,  X_{1,2s}\}$, which entails that being in state $(\tau,\tau)$ the maximal number of pulls associated with the sub-optimal arm is equal to $\tau$. Finally, still relying on Fact \ref{Fact:PermutationInvariance}, we can assume the sub-optimal arm to be selected only at even rounds in both instances $\muu$ and $\muu'$.\\
\newline
We now leverage Lemma \ref{Le:Tsyabkov} assuming $\iT=1$ in the first environment and $\iT=2$ in the second. This yields
\begin{align}
	&\max\Big(\E\big[\tau(2,2\tau)\big],\E'\big[\tau(1,2\tau)\big]\Big) \geq \frac{1}{2} \sum_{s=1}^{\tau} \exp\Big(-KL\big(\Pro_{\muu^\tau,\F_{2s}}^{\otimes s}, \Pro_{\muu'^\tau,\F'_{2s}}^{\otimes s}\big)\Big) \nonumber\\
	&\;\geq\frac{1}{2} \sum_{s=1}^{\tau} \exp\Big(- s \max_{\sbar \in[1,s]} KL\big(\Pro_{\muu^\tau,\F_{2\sbar}}, \Pro_{\muu'^\tau,\F'_{2\sbar}}\big)\Big) \nonumber\\
	&\;= \frac{1}{2} \sum_{s=1}^{\tau} \exp\big(- 4 s C \Delta^2 \big) \geq \frac{1}{8 C \Delta^2}~. \nonumber
\end{align}
Notice that this result holds for all states $(\tau,\tau)$ since, for the chosen instances, the KL-divergence is invariant with respect to the state: $KL(\Pro_{\mu^s}, \Pro_{\mu'^s}) = C \Delta^2 \; \forall s \in [T]$. So far we have considered only the $\Delta$ factor which allows any policy to commit to the arm having the lower expected loss at $\tout$. We move now to analyze the second reason of commitment. In agreement with Equation \ref{Eq:Triple}, in order to minimize the regret, any policy at state $(\tau,\tau)$ would terminate the round-robin exploration also based on $\mu_1(T-\tau-1) = \min(\mu_1(T-\tau-1), \mu_2(T-\tau-1))$. We can then repeat the same analysis considering the pair of arms with expected value $(\mu_1(T-\tau), \mu_1(T-\tau-1))$,
%{\bf CG:} where is this pair of arms coming from?
and replacing $\Delta$ with $\DeltaTilde_\tau = \mu_1(T-\tau-1) - \mu_1(T-\tau) = \ab \Big( \frac{1}{(T-\tau-1)^\rho} - \frac{1}{(T-\tau)^\rho}\Big)$. Coherently with the previous assumption when evaluating a policy at state $(\tau, \tau)$ we can consider having $\tau$ stationary samples for each arm having expected value respectively equal to $\mu_1(T-\tau)$ and $\mu_1(T-\tau-1)$.\\
%{\bf CG:} what does full info mean here ? There was also one above which I commented out
%scenario where the policy has access to $\tau$ samples with expected value $\mu_1(T-\tau)$ for the first arm, and $\tau$ samples with expected value $\mu(T-\tau-1)$ for the second one.
%{\bf CG:} what is $\mu$ here, is it $\mu_1$ ?
\newline
This first part of the proof refers to the simpler problem of understanding which index corresponds to the optimal arm given a pair of different arms. Thanks to this result we could already show the $\frac{1}{\Delta^2}$ (respectively $1/\DeltaTilde_\tau^2$) dependency. The lower bound is obtained by considering the first state $(\tau,\tau)\in [T/2]\times [T/2]$ at which any policy would be able to distinguish based on $\Delta$ or $\DeltaTilde_\tau$. This smallest $\tau$ can be defined as
\[
    \tau_0 = \arg\min \left\{\tau \in [T/2]: \tau \geq \min \left( \frac{1}{8 C \Delta^2}, \frac{1}{8 C \DeltaTilde_\tau^2} \right)\right\}.
\]
After $\tau_0$, any arm-elimination policy would stop exploring and would commit to arm $\iout$.
We can now show that the sole knowledge of $\Delta$ (respectively $\DeltaTilde_\tau$) already leads to a lower bound on $\max\Big\{\E\big[\tau(2,2\tau)\big],\E'\big[\tau(1,2\tau)\big]\Big\}$ of the order of $\log(T\Delta^2)/\Delta^2$ (respectively, $\log(T\DeltaTilde_\tau^2)/\DeltaTilde_\tau^2$).
Let us consider the following pair of instances $\muu^{\tau} = (\mu_1(T-\tau),\, \mu_1(T-\tau)+\Delta)$ and $\muu'^{\tau} = (\mu_1(T-\tau),\, \mu_1(T-\tau)-\Delta)$, and notice that
\begin{equation*}
    \max\Big\{\E\big[\tau(2,2\tau)\big],\E'\big[\tau(1,2\tau)\big]\Big\} \geq \E\big[\tau(2,2\tau)\big]~.
\end{equation*}
Following the same steps taken above, being in state $(\tau, \tau)$ implies
\begin{equation}\label{Eq:MaxKL}
    \max\Big\{\E\big[\tau(2, 2\tau)\big],\E'\big[\tau(1, 2\tau)\big]\Big\} \geq \frac{1}{2} \sum_{s=1}^{\tau} \exp\Big(-KL\big(\Pro_{\muu^\tau,\F_{2s}}^{\otimes s}, \Pro_{\muu'^\tau,\F'_{2s}}^{\otimes s}\big)\Big)~.
\end{equation}
Finally, in the considered environments, $KL(\cdot,\cdot)$ is only function of the second arm, specifically %
\begin{equation*}
    KL\big(\Pro_{\muu^\tau,\F_{2s}}^{\otimes \tau}, \Pro_{\muu'^\tau,\F_{2s}'}^{\otimes \tau}\big) = 4 C \Delta^2 \E[\tau(2,\tau)]~.
\end{equation*}
Following the same reasoning as before, the same holds when considering $\DeltaTilde_\tau$. We can then combine the above results together as follows:
\begin{align*}
	\max&\Big\{\E\big[\tau(2, 2\tau)\big],\E'\big[\tau(1, 2\tau)\big]\Big\}\\
	&\geq \frac{1}{2} \Bigg[\E[\tau(2,2\tau)] + \frac{1}{2} \sum_{s=1}^{\tau} \exp\Big(-KL\big(\Pro_{\muu^\tau,\F_{2s}}^{\otimes s}, \Pro_{\muu'^\tau,\F_{2s}'}^{\otimes s}\big)\Big) \Bigg]\\
	&\geq \frac{1}{2} \min_{x\in[0,\tau]} \bigg[ x + \frac{\tau}{2} \exp\Big(-4 C \Delta^2 x\Big) \bigg]\\
	&\geq \frac{1}{8 C \Delta^2} \log\Big(C \Delta^2 \tau / 4 \Big)~.
\end{align*}
The above also holds for $\DeltaTilde_\tau$, so that the resulting lower bound on the number of sub-optimal pulls becomes
\[
    \tsub = \min\left\{\tau \in [T/2]\,: \tau \geq \min \left\{ \frac{1}{8 C \DeltaTilde_\tau^2} \log\Big(C \DeltaTilde_\tau^2 \tau / 4 \Big) , \frac{1}{8 C \Delta^2} \log\Big(C \Delta^2 \tau / 4 \Big) \right\} \right\}~.
\]
%We want to emphasize that because of the convexity of expected loss (\ref{e:mu}), the $\DeltaTilde_\tau$ factors increase as we decrease $\tau$. Therefore, the larger the number of sub-optimal pulls the harder it becomes to evaluate the exploration-exploitation trade-off based on $\DeltaTilde_\tau$. In the opposite case, we would have that
%{\bf CG:} in the above: so what? How does this relate to what's earlier ?

%
%    SECTION B
%
\subsection{Proof of Corollary \ref{Co:RLB}}\label{SuppSec:CoRLB}
The proof is a combination of the result of Theorem \ref{Th:RLB} with the following inequalities
\begin{align*}
    &\sqrt{x} - \sqrt{x-1}>\frac{1}{2\sqrt{x}}
    &\sqrt{x} - \sqrt{x-1}<\frac{1}{2\sqrt{x-1}}
\end{align*}
which hold for $x\geq 1 $.

When considering $\rho=\frac{1}{2}$, we adopted them to obtain a simpler form for the difference $\frac{1}{(T-\tau-1)^\rho} - \frac{1}{(T-\tau)^\rho}$.
We can write
\begin{align*}
    \frac{1}{\sqrt{T-\tau-1}} - \frac{1}{\sqrt{T-\tau}} &= \frac{\sqrt{T-\tau} - \sqrt{T-\tau-1}}{\sqrt{T-\tau}\sqrt{T-\tau-1}} < \frac{1}{2\sqrt{(T-\tau-1)^3}}\\[5pt]
    \frac{1}{\sqrt{T-\tau-1}} - \frac{1}{\sqrt{T-\tau}}  &= \frac{\sqrt{T-\tau} - \sqrt{T-\tau-1}}{\sqrt{T-\tau}\sqrt{T-\tau-1}}> \frac{1}{2\sqrt{(T-\tau)^3}}.
\end{align*}
That said, starting from the proof of Theorem \ref{Th:RLB} by arithmetic calculations we have 
\begin{align*}
    \tsub = \min \Bigg\{ \tau \in [T/2] : \, \tau \geq \ceil*{ \frac{\log(\tau C \Delta^2)}{8 C \Delta^2} }, \ceil*{\frac{ (T-\tau-1)^3}{2 C \ab^2} \log\left(\frac{\tau C \ab^2}{16 (T-\tau)^3}\right)}  \Bigg\}.
\end{align*}
Finally, the $T/2$ factor guarantees the feasibility in agreement with the fact that $T$ is the maximum number of pulls.
\iffalse
Notice that the corresponding equality is associated to a single real root. For the lower bound we approximate the cubic term with a linear equation obtaining the following simpler result:
\begin{equation*}
    \frac{C \ab^2}{4 \log T} \tau \geq - \tau (T-1)^2 + (T-1)^3 
\end{equation*}
that solved in $\tau$ gives $\tau = \frac{(T-1)^3}{(T-1)^2 + \frac{C \ab^2}{4 \log T}}$.
\fi
%
%    SECTION C
%

\section{Proofs for Section \ref{Sec:ParamEst}} 

\subsection{Proof of Theorem \ref{Th:LossEstimate}}\label{SuppSec:LossEstimate}
As mentioned in Section \ref{Sec:ParamEst}, the estimation of parameters $\alpha_i,\beta_i$ relies on Bernstein's inequality, which we recall below.
\begin{theorem}\label{Th:Bernstein} Let $X_i(1), \dots, X_i(\tau)$ be $t$ independent random variables with range $[0,U+1]$ and variance $\Var[X_i(s)]$.  Then the following holds:
    \begin{equation*}
        \left|\frac{1}{\tau} \sum_{s=1}^{\tau} X_i(s) - \frac{1}{\tau} \sum_{s=1}^{\tau} \E[X_i(s)]\right|\leq \frac{2(U+1)\,\log1/\delta}{3\tau} + \frac{1}{\tau} \sqrt{2 \log \frac{1}{\delta} \sum_{s=1}^\tau \Var[X_i(s)] }
    \end{equation*}
    with probability at least $1-\delta$.
\end{theorem}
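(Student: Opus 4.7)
The plan is to prove this as the standard Bernstein concentration bound via the classical Cram\'er--Chernoff moment-generating-function argument, together with a union bound to upgrade the one-sided tail to a two-sided one. Let $Y_s = X_i(s) - \E[X_i(s)]$, so that $\E[Y_s] = 0$, $|Y_s| \leq U+1$ (since $X_i(s) \in [0, U+1]$ and its mean lies in the same interval), and $\Var[Y_s] = \Var[X_i(s)]$. Set $S = \sum_{s=1}^{\tau} Y_s$ and $V = \sum_{s=1}^{\tau} \Var[X_i(s)]$. The target is to bound $\Pro(|S| \geq \tau\,\epsilon)$ by $\delta$ for the right choice of $\epsilon$.

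The first step is the standard Chernoff bound: for any $\lambda > 0$, Markov's inequality applied to $e^{\lambda S}$ together with independence gives $\Pro(S \geq \tau\epsilon) \leq e^{-\lambda \tau \epsilon} \prod_s \E[e^{\lambda Y_s}]$. The second step is the classical MGF bound for a zero-mean, bounded random variable $Y$ with $|Y| \leq b := U+1$: $\E[e^{\lambda Y}] \leq \exp\bigl(\Var[Y]\,\phi(\lambda b)/b^2\bigr)$, where $\phi(x) = e^x - 1 - x$. One establishes this by writing $e^{\lambda Y} = 1 + \lambda Y + (e^{\lambda Y} - 1 - \lambda Y)$, taking expectations (the linear term vanishes), and using that $(e^x - 1 - x)/x^2$ is increasing in $x$ so the residual term is at most $\lambda^2 \Var[Y]\,\phi(\lambda b)/(\lambda b)^2$, followed by the inequality $1 + u \leq e^u$.

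Combining the two inequalities over $s = 1, \dots, \tau$ yields
\begin{equation*}
\Pro(S \geq \tau\epsilon) \leq \exp\!\Bigl( -\lambda \tau \epsilon + \frac{V}{b^2}\bigl(e^{\lambda b} - 1 - \lambda b\bigr)\Bigr).
\end{equation*}
The third step is to optimize over $\lambda > 0$. Rather than solving the exact minimization, one uses the convenient inequality $e^x - 1 - x \leq x^2/(2(1 - x/3))$ valid for $0 \leq x < 3$, which after setting $\lambda = \tau \epsilon / (V + b\tau\epsilon/3)$ produces the textbook Bernstein tail $\Pro(S \geq \tau\epsilon) \leq \exp\!\bigl(-\tau^2 \epsilon^2 / (2V + (2/3)b\tau\epsilon)\bigr)$. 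Setting this equal to $\delta$ and inverting the resulting quadratic in $\epsilon$ (using $\sqrt{a+b} \leq \sqrt{a} + \sqrt{b}$) gives $\tau\epsilon \leq \tfrac{2}{3}b\log(1/\delta) + \sqrt{2 V \log(1/\delta)}$, which is exactly the claimed right-hand side after dividing by $\tau$.

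The final step is to produce the absolute value: apply the same argument to $-Y_s$ (whose MGF satisfies the same bound since the range and variance are unchanged) to control $\Pro(S \leq -\tau\epsilon)$, and take a union bound at confidence $\delta/2$ on each side. I do not expect any real obstacle here, since every ingredient is classical; the only minor care needed is in verifying the MGF bound for a centered random variable whose range is $[-b, b]$ (which follows from $|Y_s| \leq U+1$), and in checking that the quadratic inversion produces the exact constants $2/3$ and $\sqrt{2}$ stated in the theorem. Whether the two-sided form is stated with $\delta$ or $\delta/2$ is a minor constant issue that does not affect downstream applications in the paper.
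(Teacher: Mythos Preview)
Your proposal is correct and follows the standard Cram\'er--Chernoff route to Bernstein's inequality. Note, however, that the paper does not actually prove this statement: it is merely \emph{recalled} as a known concentration result (``the estimation of parameters $\alpha_i,\beta_i$ relies on Bernstein's inequality, which we recall below'') and then used as a black box in the derivation of the confidence intervals for $\ah_{i,\tau}$ and $\bh_{i,\tau}$. So there is no paper proof to compare against; your argument simply supplies the classical derivation that the authors take for granted, and your remark about the $\delta$ versus $\delta/2$ discrepancy in the two-sided form is accurate and inconsequential for the downstream use.
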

Since the loss random variables in this paper have support in $[0,U+1]$, we can use the fact that $\forall\;i\in\K\;,\tau_i\in[T]$
\begin{equation*}
    \Var[X_i(\tau_i)] \leq (U+1)\,\E[X_i(\tau_i)]~.
\end{equation*}
%where we indicated with $\Var[\cdot]$ the variance of a random variable. 
%
Starting from the estimators defined in Equation (\ref{Eq:Estimators}) and in agreement with Theorem \ref{Th:Bernstein} we can construct the following confidence intervals
    \begin{align*}
        \beta_i + \frac{\alpha_i}{\tau}\sum_{s=1}^\tau \frac{1}{s^\rho} - CB_{\Xh,\tau}(\delta) &\leq \Xh_{i,\tau} \leq \beta_i + \frac{\alpha_i}{\tau}\sum_{s=1}^\tau \frac{1}{s^\rho} + CB_{\Xh,\tau}(\delta)\\ %
        \beta_i + \frac{\alpha_i}{\tau}\sum_{s=\tau+1}^{2\tau} \frac{1}{s^\rho} - CB_{\Xt,\tau}(\delta) &\leq \Xt_{i,\tau} \leq \beta_i + \frac{\alpha_i}{\tau}\sum_{s=\tau+1}^{2\tau} \frac{1}{s^\rho} + CB_{\Xt,\tau}(\delta)~,
    \end{align*}
where we introduced for brevity the following confidence bounds around $\Xh_{i,\tau}$ and $\Xt_{i,\tau}$:
\begin{align*}
        CB_{\Xh,\tau}(\delta) &= \left[ \left(\frac{1}{\tau}\sqrt{\sum_{s=1}^{\tau}\frac{U}{s^\rho}} + \sqrt{\frac{1}{\tau}}\right) \sqrt{2(U+1)\log\frac{1}{\delta}} + \frac{2(U+1)\log 1/\delta}{3\tau} \right]\\
        CB_{\Xt,\tau}(\delta) &= \left[ \left(\frac{1}{\tau}\sqrt{\sum_{s=\tau+1}^{2\tau}\frac{U}{s^\rho}} + \sqrt{\frac{1}{\tau}}\right) \sqrt{2(U+1)\log\frac{1}{\delta}} + \frac{2(U+1)\log 1/\delta}{3\tau} \right].
\end{align*}
Let now $\DX_{i,\tau} = \Xh_{i,\tau}-\Xt_{i,\tau}$. Since $CB_{\Xh,\tau}(\delta) \geq CB_{\Xt,\tau}(\delta)$ we can write
\begin{align*}
        \frac{\alpha_i}{\tau}\left( \sum_{s=1}^n \frac{1}{s^\rho} - \sum_{s=n+1}^{2\tau} \frac{1}{s^\rho} \right) - 2 CB_{\Xh, \tau} &(\delta) \leq \quad \DX_{i,\tau}\\
        &\leq \frac{\alpha_i}{\tau}\left( \sum_{s=1}^n \frac{1}{s^\rho} - \sum_{s=n+1}^{2\tau} \frac{1}{s^\rho} \right) + 2 CB_{\Xh, \tau} (\delta).
    \end{align*}
Solving for $\alpha_i$ and abbreviating  $\ab_i$'s confidence interval $[\ah_i - CB(\delta)\,,\, \ah_i + CB(\delta)]$ by
$\ab_i \in \ah_i \pm CB(\delta)$, we can write
\begin{align*}
        \alpha_i &\in \frac{\tau}{\sum_{s=1}^\tau \frac{1}{s^\rho} - \sum_{s=\tau+1}^{2\tau} \frac{1}{s^\rho}} \left[ \DX_{i,\tau} \pm 2 CB_{\Xh, \tau} (\delta) \right] \\
        &\in \frac{\tau \DX_{i,\tau} }{\sum_{s=1}^\tau \frac{1}{s^\rho} - \sum_{s=\tau+1}^{2\tau} \frac{1}{s^\rho}}\\ &\quad\pm \frac{(2\tau)^\rho}{(2^\rho-1)} \left[ \left(\frac{1}{\tau}\sqrt{\sum_{s=1}^{\tau}\frac{U}{s^\rho}}  + \sqrt{\frac{1}{\tau}} \right) 2\sqrt{2(U+1)\log\frac{1}{\delta}} + \frac{4(U+1)\log1/\delta}{3\tau}\right] \\
    	&\in \ah_{i,\tau} \pm \frac{5\tau^\rho\left(\sqrt{U}+1\right)^2}{\rho} \left[\sqrt{\frac{1}{\tau}\log\frac{1}{\delta}} + \frac{\log1/\delta}{\tau}\right]~,
\end{align*}
which corresponds to the confidence interval in Equation (\ref{Eq:alpha}). In the second step above we have used $2\sqrt{2}/(2^\rho-1) \geq 2\sqrt{2}/(\rho\log 2) \geq 5/\rho $, while in the first step we exploited the following lower bound argument:
\begin{align*}
    	\frac{1}{\tau}\sum_{s=1}^{\tau}\Big(\frac{1}{s^\rho} - \frac{1}{(s+\tau)^\rho}\Big) 
    	&\geq \frac{1}{\tau} \sum_{s=1}^{\tau}\frac{\left(\frac{s+\tau}{s}\right)^\rho -1}{(2\tau)^\rho}\\
    	&= \frac{1}{\tau} \sum_{s=1}^{\tau} \left(\frac{1 + \tau/s}{2\tau}\right)^\rho - \frac{1}{\tau} \sum_{s=1}^{\tau} \frac{1}{(2\tau)^\rho} \\
    	& = \frac{1}{\tau}\sum_{s=1}^{\tau} \left(\frac{1}{2\tau} + \frac{1}{2s}\right)^\rho - \frac{1}{(2\tau)^\rho}\\ 
    	&\geq \frac{1}{(2\tau)^\rho} (2^\rho - 1)~.
\end{align*}
Plugging the above confidence bounds for $\alpha_i$ back into the first equation of the original system of inequalities, we obtain the following result for $\beta_i$:
\begin{align*}
        \beta_i &\in \Xh_{i,\tau} - \frac{\ah_{i,\tau}}{\tau} \sum_{s=1}^\tau \frac{1}{s^\rho} \\
        &\quad \pm \frac{5\tau^\rho}{\rho} \frac{\tau^{1-\rho}}{(1-\rho)\tau} \left[ \left(\sqrt{\sum_{s=1}^{\tau}\frac{U}{s^\rho}} + \sqrt{\frac{1}{\tau}} \right) \sqrt{2(U+1)\log\frac{1}{\delta}} + \frac{2(U+1)\log1/\delta}{3\tau}\right]  \\
        &\in \Xh_{i,\tau} - \frac{\ah_{i,\tau}}{\tau} \sum_{s=1}^\tau \frac{1}{s^\rho} \pm \frac{5}{\rho(1-\rho)} \bigg[\left(\sqrt{U}+1\right)^2 \sqrt{\frac{1}{\tau}} \sqrt{2\log\frac{1}{\delta}} + \frac{2(U+1)\log1/\delta}{3\tau}\bigg]~,
    \end{align*}
which in turn corresponds to the confidence interval in Equation (\ref{Eq:beta}). The confidence interval associated with the expected loss $\mu_{i}(\tout)$ incurred by arm $i\in\K$ after $\tout$ pulls then follows from properly combining the above bounds, taking into account (\ref{e:mu}). Finally, Theorem \ref{Th:LossEstimate} is obtained by an union bound over $i,j\in\K,\tau,\tout\in[T]$ that allows us to state that for all $i,j$, $\tau$, and $\tout$,  $\mu_i(\tout) > \mu_j(\tout)$ holds with probability at least $1-\delta$,  starting from $\muh_{i,\tau}(\tout) > \muh_{j,\tau}(\tout) + 2 CB_{\mu,\tau}(\delta)$.
%
%    SECTION D
%	
\section{Proof for Section \ref{Sec:LearningProb}}\label{SuppSec:ETC_RB}

\subsection{Proof of Theorem \ref{Th:FC_ETC}}
Let us recall the confidence bound around $\mu$:
\[
\text{CB}_{\mu,\tau}(\delta) = \frac{10\left(\sqrt{U}+1\right)^2}{(1-\rho)\rho}  \Bigg[ \frac{\log(4 \tau T^2)}{\tau} + \sqrt{\frac{1}{\tau}\log(4 \tau T^2)}\Bigg].
\]
According to Algorithm \ref{Alg:FC_ETC}, we have that the exploration phase terminates once the confidence interval at $\tout$ stops overlapping with the one containing the smallest estimated loss, that is when  $\muh_{2,n_0}(\tout) - \muh_{1,n_0}(\tout) \geq 2 CB_{\mu, n_0}(1/T)$. According to Theorem \ref{Th:LossEstimate}, this translates into the following condition on the gap parameter $\Delta$:
\[
    \Delta - 2 CB_{\mu,n_0}(1/T) \geq \muh_{2,n_0}(\tout) - \muh_{1,n_0}(\tout) \geq 2 CB_{\mu,n_0}(1/T)~.
\]
This implies
%
%\vspace{-0.1in}
\begin{equation*}
        \Delta \geq \frac{40\left(\sqrt{U}+1\right)^2}{(1-\rho)\rho}  \Bigg[ \frac{\log(4 n_0 T^2)}{n_0} + \sqrt{\frac{1}{n_0}\log(4 n_0 T^2)}\Bigg].
    \end{equation*}
Solving for $n_0$ and lower bounding the RHS by removing term $\log( 4n_0 T^2)/n_0$
%    
\iffalse
%%%%%%%%%%%%%%%%%%%%%%%%%%%%%
\begin{align*}
        &\Delta \frac{(2^\rho-1)(1-\rho)}{4 \; 2^{\rho}} \geq  \Bigg[2 n^{-\frac{1}{2}}\sqrt{\log T} + \frac{\log T}{n}\Bigg]
        %\\
        %&\Delta \frac{(2^\rho-1)(1-\rho)}{\sqrt{\log T}4 \; 2^{\rho}} - \frac{\sqrt{\log T}}{n} \geq 2 n^{-\frac{1}{2}}
    \end{align*}
%%%%%%%%%%%%%%%%%%%%%%%%%%%%
\fi
%
we obtain that
\[
    n_0 > \frac{1600 (\sqrt{U}+1)^4}{\rho^2(1-\rho)^2}\frac{\log(4 n_0 T^2)}{\Delta^2}~.
\]
For this being feasible we must also have $n_0 \leq \frac{T}{K}$. Finally, the optimality condition directly follows from Theorem \ref{Th:RLB}. According to the above results we have that
\[
    n_0 = \min\left\{ \ceil*{\frac{1600 (\sqrt{U}+1)^4}{\rho^2(1-\rho)^2}\frac{\log(4 n_0 T^2)}{\Delta^2}}, \ceil*{\frac{T}{2}} \right\}.
\]
Notice that only when the minimum corresponds to the first argument ETC can guarantee $\mu_{\iout}(T-n_0)=\mu*(T-n_0)$ with probability at least $1-\delta$. In the second case the only available guarantee is that $\mu_{\iout}(T-n_0) \leq \mu*(T-n_0) + 2 CB_{\mu,n_0}(1/T)$. The statement follows by observing that $\mu*(T-n_0) - \mu_{\iT}(T) = \alpha \left(\frac{1}{(T-n_0)^\rho} - \frac{1}{T^\rho}\right)$.
%Finally, thanks to Theorem \ref{Th:LossEstimate}, 
\iffalse
%%%%%%%%%%%%%%%%%%
the constructed confidence bound $CB_{\mu,\tau}(\delta)$ guarantees that:
    \begin{align*}
        \Pro\Big[&\exists i\in\K \; \exists\tau,\tout\in[T]: |\muh_{i,\tau}(\tout) - \mu_i(\tout) | \geq \epsilon \Big]\\
        %
        &\leq 2 K T \sum_{\tau\geq 1} \exp\left\{-\frac{\tau \epsilon^2}{2 \Var[\muh(\tout)]+ \frac{2}{3}\epsilon}\right\} \leq \delta.
    \end{align*}
%%%%%%%%%%%%%%%%%%%%%%%
\fi
%This conditions ensures that 
%$\muh_{i,\tau}(\tout) > \muh_{j,\tau}(\tout) + 2 CB_{\mu,\tau}(\delta)$ holding for
%$\forall i,j\in\K,\tau,\tout\in[T]$
%implies $\mu_i(\tout) > \mu_j(\tout)$ with probability at least $1-\delta$.
%
%    SECTION E
%
%\section{\RS Upper Bounds.}\label{SuppSec:RESTSURE_RBS}

\subsection{Proof of Theorem \ref{Th:RESTSURE_RB}}\label{SuppSec:RESTSURE_RB}
We only present the analysis for the first arm $\sigma(1)\in\As_1$. The same line of reasoning holds for the other arms, the only difference being that set $\As_s$ becomes $\As_s = \K \setminus \{\sigma(1),\dots,\sigma(s-1)\}$.\\
According to the elimination condition in Line \ref{AlgLine:DAEElimination} of Algorithm \ref{Alg:RESTSURE}, the exploration over arm $\sigma(1)$ terminates as soon as the following condition is satisfied
\begin{align*}
    &\min_{i\in\K, m \in [n_{\sigma(1)}, T-(K-1)n_{\sigma(1)}]} \big( \mu_{\sigma(1)}(m) - \mu_i(m)\big) \\
    &\hspace{5em} \geq \frac{40\left(\sqrt{U}+1\right)^2}{(1-\rho)\rho} \Bigg[ \frac{\log(n_{\sigma(1)} K^2 T^2)}{n_{\sigma(1)}} + \sqrt{\frac{1}{n_{\sigma(1)}}\log(n_{\sigma(1)} K^2  T^2)}\Bigg]~.
\end{align*}
Similar to the proof of Theorem \ref{Th:FC_ETC}, solving the above for $n_{\sigma(1)}$ gives
\begin{align*}
    n_{\sigma(1)} \geq \frac{1600 (\sqrt{U}+1)^4}{\rho^2(1-\rho)^2}\frac{\log(n_{\sigma(1)} k^2 T^2)}{\Big(\min_{i\in\K, m \in [n_{\sigma(1)}, T-(K-1)n_{\sigma(1)}]} \big( \mu_{\sigma(1)}(m) - \mu_i(m) \big)\Big)^2}~.
\end{align*}
The other case where arm $\sigma(1)$ is (implicitly) removed corresponds to the case where \RS prefers to commit to $i\in\K$. This occurs in one of the following cases:
\begin{align*}
    \begin{cases}
        \mu^*\big(T-(K-1)&(n_{\sigma(1)}+1)\big) -\mu^*\big(T-(K-1)(n_{\sigma(1)})\big) \geq \\
        &\quad \frac{20\left(\sqrt{U}+1\right)^2}{(1-\rho)\rho} \Bigg[ \frac{\log(n_{\sigma(1)} K^2 T^2)}{n_{\sigma(1)}} + \sqrt{\frac{1}{n_{\sigma(1)}}\log(n_{\sigma(1)} K^2  T^2)}\Bigg]\\
        \min_{i\in\K\setminus\{\sigma(1)\}} &\big( \mu_{i}(T-n_{\sigma(1)}) - \mu_{\sigma(1)}(T-n_{\sigma(1)})\big) \geq \\ 
        &\quad  \frac{20\left(\sqrt{U}+1\right)^2}{(1-\rho)\rho} \Bigg[ \frac{\log(n_{\sigma(1)} K^2 T^2)}{n_{\sigma(1)}} + \sqrt{\frac{1}{n_{\sigma(1)}}\log(n_{\sigma(1)} K^2  T^2)}\Bigg]~,
    \end{cases}
\end{align*}
these inequalities corresponding to the conditions specified in Line \ref{AlgLine:StopExploration} and Line \ref{AlgLine:DAECommit} of Algorithm \ref{Alg:RESTSURE}, respectively. Solving these for $n_{\sigma(1)}$ yields:
\begin{align*}
    \begin{cases}
        n_{\sigma(1)} &\geq \frac{1600 (\sqrt{U}+1)^4}{\rho^2(1-\rho)^2}\frac{\log(n_{\sigma(1)} K^2 T^2)}{\Big(\mu^*\big(T-(K-1)(n_{\sigma(1)}+1)\big) -\mu^*\big(T-(K-1)(n_{\sigma(1)})\big) \Big)^2}\\
        n_{\sigma(1)} &\geq \frac{1600 (\sqrt{U}+1)^4}{\rho^2(1-\rho)^2}\frac{\log(n_{\sigma(1)} K^2 T^2)}{\Big(\min_{i\in\K\setminus\{\sigma(1)\}} \big( \mu_{i}(T-n_{\sigma(1)}) - \mu_{\sigma(1)}(T-n_{\sigma(1)})\big) \Big)^2}~.
    \end{cases}
\end{align*}
Notice that there is a substantial difference between the two conditions above. In the first case, thanks to the constructed confidence bounds (Theorem \ref{Th:LossEstimate}) which hold with high probability, \RS can guarantee that $\iout\in\arg\min_{i\in\K}\mu_i\big(T-(K-1)n_{\sigma(1)}\big)$ with the same probability. Conversely, in the second case, the only available guarantee is that $\mu_{\iout}(T-(K-1)n_{\sigma(1)}) \leq \min_{i\in\K}\mu_i(T-(K-1)n_{\sigma(1)}) + 2 CB_{\mu,n_{\sigma(1)}}(\delta)$. Finally, in both cases we would have $\tout = T-(K-1)n_{\sigma(1)}$.

It is important to remark here that $\sigma(1)$ is defined as $\sigma(1) = \arg\min_{j\in\As_0} n_{\sigma(j)}$, which is solely a function of the problem parameters, rather than an algorithm-dependent quantity. 
%In fact, $\sigma(1)$ is defined only as a function of the problem-parameters. 

The statement of the Theorem is then obtained by iterating this very same reasoning to all arms $\sigma(2),\dots,\sigma(K-1),\iout$ in turn.
%
%    SECTION F
%    
\subsection{Proof of Corollary \ref{Co:RESTSURE_Bound}}\label{SuppSec:RESTSURE_CO}
The proof directly follows from the one mentioned in the previous section. Following the notation adopted in the statement of Theorem \ref{Th:RESTSURE_RB}, we set $n_{\sigma(1)}=n_2$ for the number of pulls necessary to eliminate arm $2\in\K$ based on the condition displayed in Line \ref{AlgLine:DAEElimination} of Algorithm \ref{Alg:RESTSURE}. Hence, for arm $2\in\K$, we can write
\begin{equation*}
    \Delta \geq \frac{40\left(\sqrt{U}+1\right)^2}{(1-\rho)\rho} \Bigg[ \frac{\log(4 n_2 T^2)}{n_2} + \sqrt{\frac{1}{n_2}\log(4 n_2 T^2)}\Bigg].
\end{equation*}
Similar to what we did in the proof of Theorem \ref{Th:FC_ETC}, solving the above for $n_2$ yields
\begin{equation*}
    n_2 \geq 25600 (\sqrt{U}+1)^4\frac{\log(4 n_2 T^2)}{\Delta^2}~.
\end{equation*}
We can now analyze the implicit elimination condition that corresponds to \RS committing to arm $\iout$. When this is the case, we have
\begin{equation*}
    n_{\iout} \geq
    25600 (\sqrt{U}+1)^4 \frac{\log(4 n_{\iout} T^2)}{\ab^2\Bigg(\frac{1}{\big(T-(n_{\iout}+1)\big)^{\frac{1}{2}}} - \frac{1}{\big(T-n_{\iout}\big)^{\frac{1}{2}}} \Bigg)^2}
\end{equation*}
which, similar to the derivation contained in the proof of Corollary \ref{Co:RLB}, gives
\[
    n_{\iout} \geq 25600 (\sqrt{U}+1)^4 \frac{(T-n_{\iout})^3}{\ab^2} \log(4 n_{\iout} T^2)~.
\]
Combining the above results we obtain
\[
    n_0 = \min \left\{ \frac{c_\rho}{\Delta^2} , c_\rho \frac{(T-n_0)^3}{\alpha^2}\right\},
\]
where $c_\rho = 25600 (\sqrt{U}+1)^4 \log(4 n_0 T^2)$. As already discussed, if the commitment occurs due to the second condition (Line \ref{AlgLine:StopExploration} of Algorithm \ref{Alg:RESTSURE}), the tighter 
%instance-dependent 
regret bound we can obtain satisfies
\begin{equation*}
    \mu_{\iout}(T - n_0) < \mu^*(T-n_0) + 2 CB_{\mu,n_{\iout}}(\delta).
\end{equation*}
Conversely, when \RS commits to $\iout$ based on the first condition (Line \ref{AlgLine:Commit} of Algorithm \ref{Alg:RESTSURE}), we have the tighter result $\beta_{\iout} = \beta_1$.

\end{document}